\documentclass[runningheads]{llncs}

\usepackage[T1]{fontenc}
\usepackage{graphicx}

\usepackage{amsmath}
\usepackage{amsfonts}
\usepackage{booktabs}
\usepackage{multirow}
\usepackage{xcolor}
\usepackage[unicode=true,hidelinks]{hyperref}
\usepackage{algorithm}
\usepackage{algorithmic}
\usepackage{orcidlink}
\usepackage{float}

\newcommand{\TV}{\mathcal{T}}
\newcommand{\TRUE}{\textsc{True}}
\newcommand{\UND}{\textsc{Undec}}
\newcommand{\FALSE}{\textsc{False}}
\newcommand{\delt}{\delta}
\newcommand{\thet}{\theta}
\newcommand{\decacc}{\text{Dec-Acc}}

\begin{document}
\setlength{\emergencystretch}{1.5em}

\title{Ternary Decision Trees with Locally-Adaptive Uncertainty Zones}

\author{William Smits \orcidlink{0009-0007-1673-8172}}
\institute{Avathon, Austin, TX, USA \\
\email{will.smits@avathongov.com}}
\maketitle

\begin{abstract}
Decision trees partition the feature space using hard binary thresholds,
assigning identical confidence to instances far from a decision boundary
and to those directly on it.
We introduce \emph{ternary decision trees}, which augment each split node
with an \emph{uncertainty zone} of half-width $\delta$ centred on the
optimal threshold.
Instances whose feature value falls within this zone receive predictions
formed by weighted blending of both child subtrees and are flagged as
\emph{boundary-uncertain}, signalling that downstream applications may
wish to treat these predictions differently.
Crucially, $\delta$ is computed \emph{locally at each node} from
statistics already available during standard CART split finding,
requiring no external noise specification and no additional data.
We ground this approach in a decision-theoretic framework that
characterises the optimal zone half-width $\delta^*$ at each node as
the solution to a local cost-minimisation problem, and establish four
formal properties: an accuracy decomposition, a sufficiency condition
for decided accuracy improvement, an exact efficiency characterisation
($\eta = \mathrm{Dec\text{-}Acc} - \mathrm{Acc}_u$, the accuracy gap
between decided and boundary-uncertain predictions), and asymptotic
consistency of the margin method.
We propose and evaluate five $\delta$-estimation methods:
\emph{quality-plateau} (plateau width of the split criterion curve),
\emph{class-overlap} (empirical class-distribution overlap),
\emph{gain-ratio} (split quality relative to split entropy),
\emph{node-bootstrap} (threshold variance under node-level resampling),
and \emph{margin} (SVM-inspired distance to the nearest cross-class
training example).
Evaluated across 71 of the 72 OpenML-CC18 datasets with 5-fold cross-validation
(one image dataset with $d=3{,}072$ raw pixel features was excluded),
all five methods with probabilistic routing significantly outperform
standard CART on decided accuracy (Wilcoxon signed-rank, $p \le 0.001$).
The margin method achieves the best efficiency (0.104 accuracy gain per
unit of boundary-uncertain flagging rate), wins on 42 of 72 datasets,
and requires zero additional hyperparameters.
Analysis on three Breiman synthetic benchmarks with known Bayes errors
reveals that the margin method is self-calibrating on geometrically clean
data, while node-bootstrap and quality-plateau maintain the best ratio of
flagging rate to theoretical irreducible error.
Experiments on four medical and financial datasets demonstrate practical
value: on mammography, node-bootstrap achieves $+0.71\%$ decided accuracy
by flagging $10.8\%$ of screening cases as boundary-uncertain.
\end{abstract}

\section{Introduction}
\label{sec:intro}

Decision trees are among the most widely used classification algorithms
due to their interpretability, computational efficiency, and competitive
accuracy on tabular data~\cite{breiman1984classification,quinlan1993c4}.
At each internal node, CART selects the feature $f$ and threshold
$\thet$ that maximise a split criterion (Gini impurity or information
gain), routing instances with $x_f \le \thet$ left and $x_f > \thet$
right.
This mechanism is fundamentally \emph{binary}: every instance is routed
to exactly one child, and the confidence of that routing is the same
regardless of whether the instance sits far from $\thet$ or within a
single unit of it.

This hard commitment creates a known weakness near decision boundaries.
An instance with $x_f = \thet + \varepsilon$ is treated identically to
one with $x_f = \thet + 10\varepsilon$, yet the former sits in a region
where the optimal threshold may genuinely be uncertain given the training
data.
Classifying near-boundary instances with the same confidence as
clearly-classified instances overstates certainty in precisely the cases
where the learned split is least reliable.

We address this by introducing a \emph{ternary} evaluation at each
decision node.
Around the optimal threshold $\thet$, we define an
\emph{uncertainty zone} of half-width $\delt$:
\begin{equation}
\text{zone}(x_f) = \begin{cases}
  \TRUE  & \text{if } x_f > \thet + \delt \\
  \UND   & \text{if } \thet - \delt < x_f \le \thet + \delt \\
  \FALSE & \text{if } x_f \le \thet - \delt
\end{cases}
\label{eq:zone}
\end{equation}
Instances in the \TRUE{} and \FALSE{} zones are routed deterministically.
Instances in the \UND{} zone receive predictions formed by
distance-weighted blending of both child subtrees and are flagged as
\emph{boundary-uncertain}, enabling downstream systems to treat them
differently from confident predictions.
Figure~\ref{fig:decision_surface} illustrates the resulting three-region
partition on a two-dimensional synthetic dataset.

\begin{figure}[t]
\centering
\includegraphics[width=\textwidth]{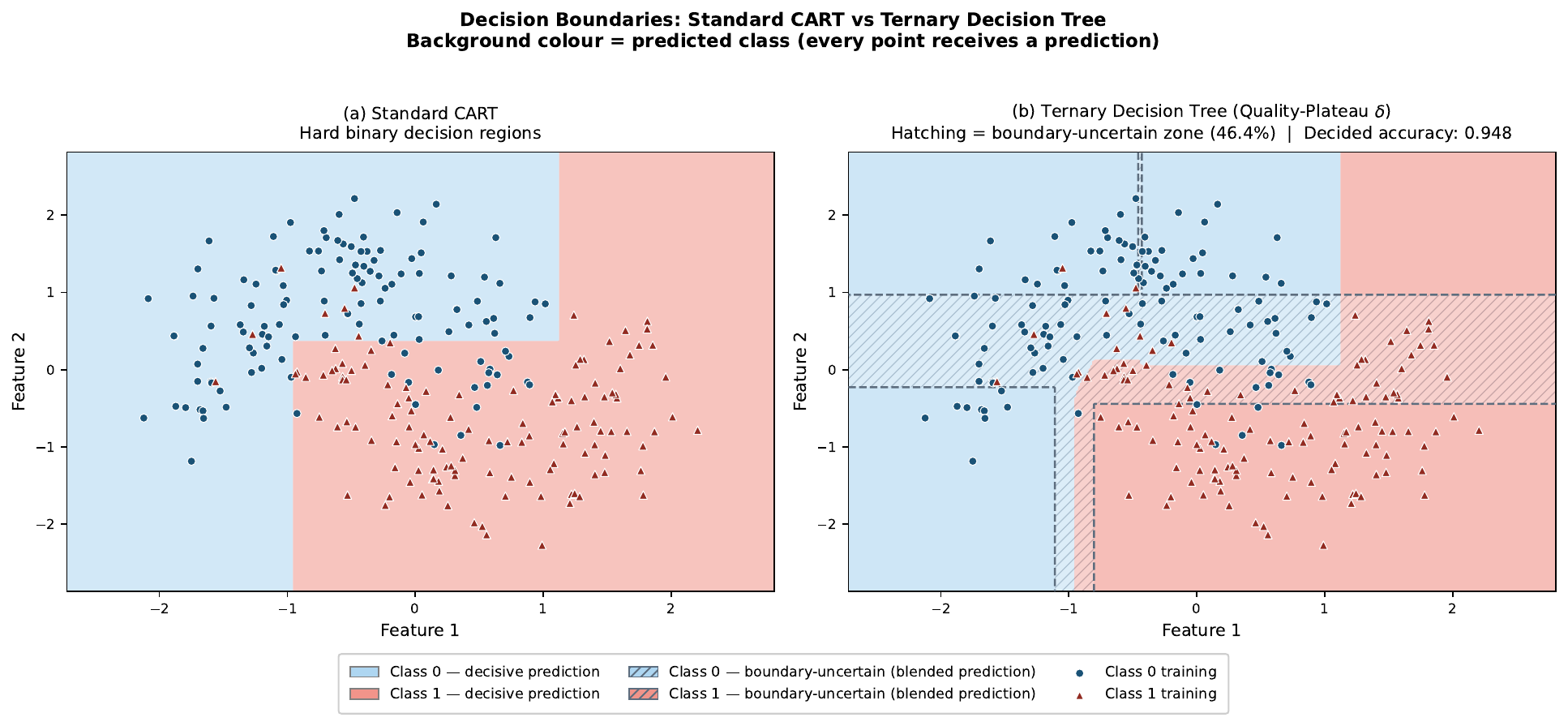}
\caption{Decision boundaries on a two-moons synthetic dataset
(quality-plateau $\delta$ method, depth~3).
Standard CART (left) makes hard binary decisions at every point.
The ternary decision tree (right) shows the same two-class colour
background everywhere --- every point receives a class prediction.
The hatched overlay marks the \emph{boundary-uncertain zone} where
that prediction is formed by weighted blending of both child subtree
outputs rather than deterministic routing.
The hatching qualifies the \emph{confidence} of the prediction,
not its presence: the background colour reveals what class is predicted
even in the uncertain zone.
Training instances are shown as scatter points.
(Blue\,+\,hatch = Class~0 boundary-uncertain;
Red\,+\,hatch = Class~1 boundary-uncertain.)}
\label{fig:decision_surface}
\end{figure}

The central technical challenge is computing $\delt$ appropriately for
each node.
The optimal $\delt^*$ at a given node is the distance from $\thet^*$
within which the expected cost of a decisive prediction exceeds the cost
of a boundary-uncertain one --- a quantity that depends on the local class
geometry and can differ substantially between nodes at the same depth
(Section~\ref{sec:motivation}).
Prior work either specifies $\delt$ externally from a known noise
model~\cite{kent2022indecision}, which is inapplicable when no such model
is available, or derives global thresholds from dataset-level Bayesian risk
parameters~\cite{yao2010threeway,zhi2022fuzzy3wd}, assigning the same
boundary region to every split node regardless of how confident or
ambiguous the local split is.
We derive $\delt$ \emph{locally at each node} from quantities already
computed during standard CART split finding, requiring no additional
data and no external noise model.

\paragraph{Contributions.}
\begin{enumerate}
\item Five methods for computing a node-local uncertainty zone
      half-width $\delt$ from split statistics.
      A decision-theoretic framework establishes the theoretically
      optimal half-width $\delt^*$ as the solution to a node-local
      cost-minimisation problem; the five methods are tractable
      approximations to this optimum, each estimating a different
      observable property of the local error probability $p_e(d)$
      (Sections~\ref{sec:method} and~\ref{sec:motivation}).
\item Four formal theoretical properties: an accuracy decomposition,
      a sufficiency condition for decided accuracy improvement, an
      exact characterisation of the efficiency metric, and asymptotic
      consistency of the margin method
      (Section~\ref{sec:theory}).
\item Two routing architectures for propagating boundary-uncertain
      instances through the tree
      (Section~\ref{sec:routing}).
\item An extensive empirical evaluation across 71 of the 72 OpenML-CC18
      datasets (one high-dimensional image dataset excluded), three Breiman
      synthetic benchmarks with known Bayes errors, and four
      medical/financial datasets (Section~\ref{sec:experiments}).
\item A diagnostic framework using the Undecided/Bayes ratio to
      assess the sensitivity of each $\delt$ method relative to the
      theoretical irreducible error of a dataset
      (Section~\ref{sec:breiman}).
\end{enumerate}

\section{Related Work}
\label{sec:related}

\paragraph{Three-Way Decisions and Rough Sets.}
Yao's three-way decisions framework~\cite{yao2010threeway,yao2012twdsurvey}
partitions instances into positive, boundary, and negative regions
corresponding to acceptance, deferral, and rejection.
The thresholds defining these regions are derived from global Bayesian
risk functions using dataset-level loss parameters rather than local node
statistics.
Pawlak's rough set theory~\cite{pawlak1982roughsets} formalises a
three-region structure identical in form to Equation~\ref{eq:zone},
and the formal connection to three-valued (Kleene) logic has been
established by Avron and Konikowska~\cite{avron2008roughkleene}.
These frameworks provide the theoretical grounding for our approach;
our contribution is a computationally tractable method for deriving
the region boundaries locally at each tree node.

\paragraph{Indecision Trees.}
Kent and M\'enager~\cite{kent2022indecision} modify decision trees to
propagate externally specified measurement uncertainty through the
tree structure.
Their uncertainty zones are derived from known sensor noise rather than
from the training data itself.
Our work addresses the more common setting where no external uncertainty
model is available, deriving $\delt$ entirely from the training data at
each node.

\paragraph{Fuzzy and Soft Decision Trees.}
Fuzzy decision trees~\cite{olaru2003fuzzy} use graded membership
functions at each node, providing a continuous analogue of our discrete
three-zone structure.
Zhi et al.~\cite{zhi2022fuzzy3wd} combine fuzzy trees with three-way
decisions but apply a global post-hoc threshold rather than per-node
local adaptation.
Soft decision trees~\cite{irsoy2012softdt,frosst2017softdt} use learned
sigmoid routing functions, optimised by gradient descent, producing
probabilistic routing without an explicit uncertainty zone.
None of these approaches derives a per-node uncertainty zone from the
local split quality statistics.

\paragraph{Abstaining and Reject-Option Classifiers.}
A related line of work adds a reject option to binary
classifiers~\cite{chow1970optimum,bartlett2008abstaining}.
The reject option is applied at the classifier output level and does
not modify the internal tree structure.
Credal classifiers~\cite{zaffalon2002credal} produce set-valued
predictions using imprecise probability, providing uncertainty at the
output level without structural changes to the model.
In our framework, \emph{boundary-uncertain} instances still receive a
class prediction (formed by weighted subtree blending), so the
distinction from abstention is fundamental: we never withhold a
prediction; we qualify it.

\paragraph{Dominance-Based Rough Sets.}
The Dominance-Based Rough Set Approach (DRSA)~\cite{greco2001drsa}
uses ordered comparison operators $(\ge, \le)$ to produce certain and
possible rules analogous to our \TRUE{} and \UND{} zones.
However, in DRSA the three-zone structure is a property of rule coverage
across the training set rather than of individual predicate evaluation.
Our framework produces a three-valued evaluation \emph{per instance per
predicate}, which is structurally different.

\paragraph{Limitations of existing approaches.}
The methods surveyed above share a fundamental gap: none derives the
uncertainty zone half-width locally from the quality of the individual
split, and none provides an optimality argument for the chosen zone width.
Three-way decisions use dataset-level loss parameters to set the boundary
region globally; every node in every tree receives the same thresholds
regardless of local split confidence~\cite{yao2010threeway}.
Indecision Trees adapt to individual nodes but require an externally
specified noise model that is unavailable in most classification
settings~\cite{kent2022indecision}.
Fuzzy methods use globally optimised membership parameters that cannot
distinguish a node with a wide, ambiguous quality plateau from one with
a sharp, decisive split~\cite{olaru2003fuzzy,zhi2022fuzzy3wd}.
Soft decision trees learn routing probabilities by gradient descent,
producing no interpretable uncertainty zone and no connection to split
quality statistics~\cite{irsoy2012softdt}.
Abstaining and credal classifiers operate at the model output level and
cannot localise uncertainty to specific decision boundaries within the
tree~\cite{chow1970optimum,zaffalon2002credal}.
Imprecise probability approaches to decision
trees~\cite{zaffalon2002credal} do provide node-level uncertainty
quantification via set-valued probability intervals, but derive zone
widths from imprecision parameters rather than from local split
quality statistics, and do not adapt to threshold stability.
Our decision-theoretic framework (Section~\ref{sec:motivation})
addresses all of these limitations: the optimal zone half-width $\delt^*$
is characterised as the solution to a node-local cost-minimisation problem
(Equation~\ref{eq:delta_optimal}), and the five estimation methods serve
as tractable approximations under different modelling assumptions
(Table~\ref{tab:estimators}); all five are derived from statistics
already computed during standard CART split evaluation, requiring no
separate optimization pass and no additional data access.

\section{Ternary Decision Trees}
\label{sec:method}

\subsection{Notation and Preliminaries}

Let $\mathcal{D} = \{(\mathbf{x}_i, y_i)\}_{i=1}^N$ be a training set
with $\mathbf{x}_i \in \mathbb{R}^d$ and $y_i \in \{0, 1, \ldots, K-1\}$.
A standard CART node selects $f^* \in \{1,\ldots,d\}$ and
$\thet^* \in \mathbb{R}$ maximising the split quality criterion $Q$
(Gini gain or information gain) over the $n$ training instances
reaching that node.

We denote the ternary value set as $\TV = \{\TRUE, \UND, \FALSE\}$
with numeric encoding $\TRUE=1$, $\UND=0$, $\FALSE=-1$.
Ternary conjunction follows Kleene's strong three-valued logic:
$\text{AND}(a, b) = \min(a, b)$, i.e.,
\begin{equation}
\TRUE \wedge \UND = \UND, \quad \FALSE \wedge \UND = \FALSE,
\quad \TRUE \wedge \TRUE = \TRUE.
\label{eq:and}
\end{equation}

\subsection{Uncertainty Zone Semantics}

The uncertainty zone of half-width $\delt$ around $\thet^*$ represents
\emph{threshold uncertainty}: the observation that the optimal threshold
is not perfectly determined by the finite training set, and values close
to $\thet^*$ might lie on either side of the true boundary.
This interpretation motivates the five estimation methods below.
Setting $\delt = 0$ recovers standard binary CART behaviour.

\subsection{Decision-Theoretic Motivation}
\label{sec:motivation}

We derive the uncertainty zone half-width $\delt^*$ from a formal
cost-sensitive framework, unifying the five estimation methods of
Section~\ref{sec:deltamethods} under a single principle.

\paragraph{Cost structure.}
At each split node we associate three costs with the three possible
verdicts for an arriving instance:
\begin{itemize}
\item $c_c \ge 0$: cost of a decisive and \emph{correct} prediction;
\item $c_w > c_c$: cost of a decisive and \emph{wrong} prediction;
\item $c_u \in (c_c, c_w)$: cost of a \emph{boundary-uncertain}
      prediction, which still produces a blended class output.
\end{itemize}
Setting $c_c = 0$ without loss of generality, the cost ratio
$\lambda = c_u / c_w \in (0, 1)$ captures the relative cost of
flagging versus decisive misclassification.
High-stakes domains (medical, financial) have $c_u \ll c_w$, giving
small $\lambda$ and large uncertainty zones; high-throughput, low-stakes
settings have $c_u \approx c_w$, giving $\lambda \approx 1$ and
recovering near-standard CART behaviour.

\paragraph{Local error function.}
Let $d = |x_f - \thet^*|$ denote the distance of a feature value
from the optimal threshold $\thet^*$.
Define the \emph{local error probability}
\begin{equation}
p_e(d) \;=\; \Pr\!\left[\hat{y} \ne y \;\middle|\;
             |x_f - \thet^*| = d\right],
\label{eq:pe}
\end{equation}
the probability that a decisive prediction for an instance at
distance $d$ from $\thet^*$ is incorrect.
Intuitively $p_e$ is non-increasing in $d$: instances far from the
threshold are easy to classify correctly, while instances near
$\thet^*$ straddle the decision boundary.
This monotonicity holds when the class-conditional densities of
$x_f$ satisfy the monotone likelihood ratio property near $\thet^*$,
which is satisfied by exponential family distributions and holds
approximately whenever $\thet^*$ is a meaningful discriminative
threshold.
At $d = 0$, $p_e(0)$ is maximised (approaching $0.5$ for balanced,
overlapping classes); as $d \to \infty$, $p_e(d) \to 0$.

\paragraph{Optimal uncertainty zone.}
The expected cost of a decisive verdict at distance $d$ is
$p_e(d) \cdot c_w$.
The expected cost of a boundary-uncertain verdict is $c_u$,
independent of $d$ (the blending mechanism absorbs the uncertainty
regardless of how close to $\thet^*$ the instance falls).
The rational verdict is boundary-uncertain whenever the decisive
expected cost exceeds the uncertain cost:
\begin{equation}
p_e(d) \cdot c_w \;>\; c_u
\;\iff\;
p_e(d) \;>\; \lambda.
\label{eq:threshold_condition}
\end{equation}
This defines the \emph{optimal uncertainty zone half-width} as:
\begin{equation}
\delt^* \;=\; \sup\!\left\{d \ge 0 \;:\; p_e(d) > \lambda\right\}.
\label{eq:delta_optimal}
\end{equation}
Under mild regularity conditions on $p_e$ (monotone non-increasing,
right-continuous), $\delt^*$ is well-defined and satisfies
$p_e(\delt^*) = \lambda$.
Setting $\delt = \delt^*$ at each node recovers the minimum expected
cost over all threshold-based routing policies.

\paragraph{The five methods as estimators of $p_e$.}
In practice $p_e(d)$ is not directly observable from finite training
data.
The five methods of Section~\ref{sec:deltamethods} each provide a
tractable approximation to $\delt^*$ by estimating a different
property of $p_e$, as summarised in Table~\ref{tab:estimators} and
illustrated in Figure~\ref{fig:motivating}.

\begin{table}[t]
\caption{The five $\delt$ methods as estimators of the local error
function $p_e(d)$.  Each approximates $\delt^*$ under a different
modelling assumption about how $p_e$ relates to observable node
statistics.}
\label{tab:estimators}
\centering
\small
\begin{tabular}{lll}
\toprule
Method & Estimated quantity & Assumption \\
\midrule
Node-bootstrap   & Variance of $\thet^*$ under resampling
                 & $\mathrm{std}(\thet^*_b) \propto$ scale of $p_e$ \\
Margin           & Distance to nearest cross-class example
                 & Geometric lower bound on support of $p_e$ \\
Quality-plateau  & Width of near-optimal criterion plateau
                 & Plateau width $\propto$ region where $p_e > \lambda$ \\
Gain-ratio       & Inverse of normalised information gain
                 & Low GR $\Rightarrow$ high avg.\ $p_e$ at this node;
                   indirect proxy (does not model $p_e$ as a function of $d$) \\
Class-overlap    & Empirical overlap of class distributions
                 & Marginal overlap as indirect proxy for the
                   region where $p_e(d)$ may be non-negligible \\
\bottomrule
\end{tabular}
\end{table}

\begin{figure}[t]
\centering
\includegraphics[width=\textwidth]{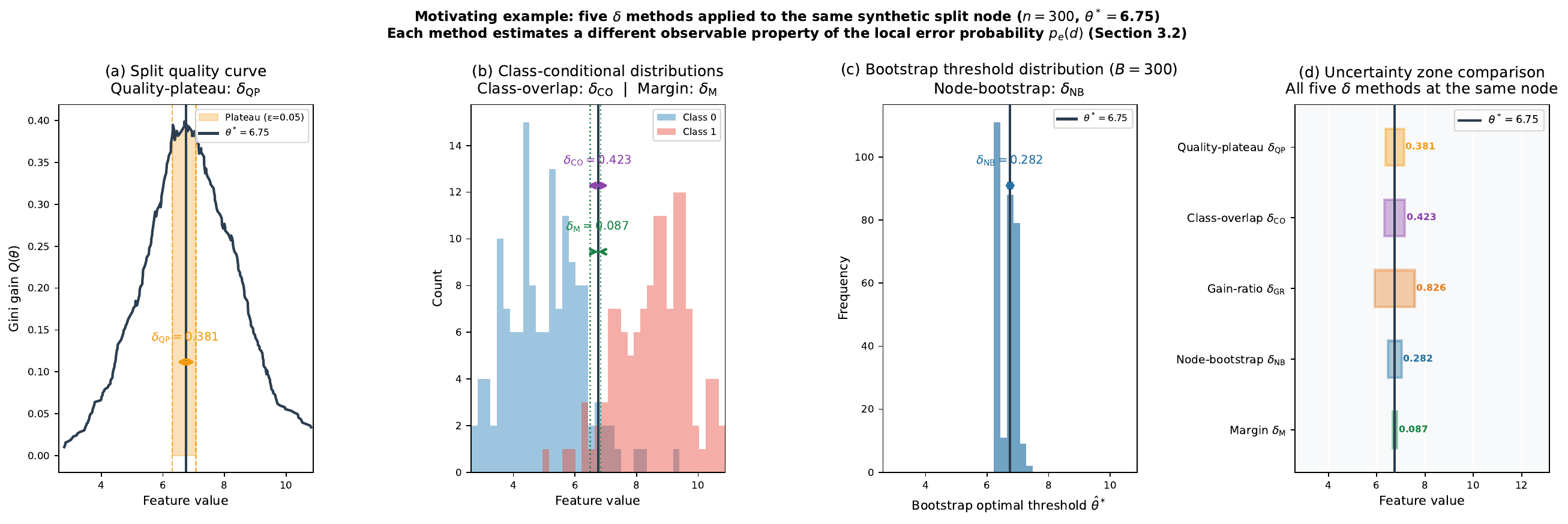}
\caption{Five $\delt$ methods applied to the same synthetic split node
($n=300$, $\thet^*=6.75$, classes $\mathcal{N}(5,1.5)$ and
$\mathcal{N}(9,1.5)$).
(a) Split quality curve with near-optimal plateau highlighted;
$\delt_{\mathrm{QP}}$ is the half-width of this plateau.
(b) Class-conditional distributions; $\delt_{\mathrm{CO}}$ captures
distributional overlap and $\delt_{\mathrm{M}}$ the distance to the
nearest cross-class example.
(c) Bootstrap distribution of optimal thresholds across $B=300$
resamples; $\delt_{\mathrm{NB}}$ equals the standard deviation.
(d) Resulting uncertainty zones $[\thet^*-\delt,\,\thet^*+\delt]$
for all five methods, illustrating their differing sensitivity.
Under the decision-theoretic framework, each method estimates a
different aspect of the local error probability $p_e(d)$.}
\label{fig:motivating}
\end{figure}

\subsection{Five Methods for Computing \texorpdfstring{$\delt$}{delta}}
\label{sec:deltamethods}

Each of the five methods below is a tractable approximation to the
theoretically optimal $\delt^*$ of Equation~\eqref{eq:delta_optimal}.
Because $p_e(d)$ is not directly observable from finite training data,
each method estimates a different surrogate quantity that correlates
with the local classification error near $\thet^*$, as detailed in
Table~\ref{tab:estimators}.
Critically, four of the five methods (quality-plateau, gain-ratio,
class-overlap, and margin) use only statistics already computed during
CART's own split evaluation --- quality scores, class distributions, and
candidate thresholds --- incurring zero additional data passes and
negligible computational overhead over standard CART.
Node-bootstrap is the sole exception, requiring $B$ bootstrap resamples
per node; it is excluded on large datasets ($N > 20{,}000$) for this
reason.

All five methods receive as input the column of feature values
$\mathbf{c} \in \mathbb{R}^n$, the class labels $\mathbf{y} \in
\{0,\ldots,K-1\}^n$, the sample weights $\mathbf{w} \in \mathbb{R}^n$,
the optimal threshold $\thet^*$, and the quality scores
$\{Q(\thet_j)\}$ evaluated at all candidate thresholds $\{\thet_j\}$.
These are all quantities already computed during standard CART split
finding.
Each method returns a non-negative scalar $\delt$, clamped to at most
$25\%$ of the feature range to prevent degenerate cases.

\paragraph{Quality Plateau ($\delt_\text{QP}$).}
The quality curve $Q(\thet)$ typically has a plateau of near-optimal
thresholds around $\thet^*$.
The width of this plateau reflects how precisely the data constrains the
optimal split location:
\begin{equation}
\delt_\text{QP} = \frac{\thet_\text{hi} - \thet_\text{lo}}{2},
\quad \text{where } Q(\thet) \ge (1 - \varepsilon) Q(\thet^*)
\text{ for } \thet \in [\thet_\text{lo}, \thet_\text{hi}].
\label{eq:qp}
\end{equation}
The tolerance $\varepsilon \in [0,1]$ controls the plateau width; we use
$\varepsilon = 0.05$.
This method adds only $O(T)$ work per node by scanning the quality curve
already evaluated, where $T$ is the number of candidate thresholds.

\paragraph{Class Overlap ($\delt_\text{CO}$).}
The width of the region where class-conditional feature distributions
overlap directly measures where neither class owns the feature space:
\begin{equation}
\delt_\text{CO} = \frac{1}{2} \max_{c \ne c'} \left[
  \min\!\left(P_{1-q}^{(c)},\, P_{1-q}^{(c')}\right) -
  \max\!\left(P_q^{(c)},\, P_q^{(c')}\right)
\right]^+\!,
\label{eq:co}
\end{equation}
where $P_q^{(c)}$ is the $q$-th percentile of feature values for class
$c$ at this node, and $[\cdot]^+ = \max(0, \cdot)$.
We use $q = 0.10$.

\paragraph{Gain Ratio ($\delt_\text{GR}$).}
The gain ratio $\text{GR} = \text{IG}(\thet^*) / H_\text{split}(\thet^*)$
measures how informative the split is relative to its balance.
High gain ratio indicates a clear, confident split; low gain ratio
indicates ambiguity:
\begin{equation}
\delt_\text{GR} = \frac{\alpha \cdot \text{range}(\mathbf{c})}{1 + \text{GR}},
\label{eq:gr}
\end{equation}
where $\alpha$ is a global scale parameter (default $\alpha = 0.10$) and
$H_\text{split}(\thet^*)$ is the binary entropy of the split proportions.
This method adds zero computation beyond what CART already performs.

\paragraph{Node Bootstrap ($\delt_\text{NB}$).}
The most statistically principled method directly measures threshold
instability by resampling the node's training examples:
\begin{equation}
\delt_\text{NB} = \mathrm{std}\!\left(\thet^*_1, \ldots, \thet^*_B\right),
\label{eq:nb}
\end{equation}
where $\thet^*_b$ is the optimal threshold on the $b$-th bootstrap
resample of the $n$ examples at this node.
We scale $B$ with dataset size: $B = 20$ for $n < 2000$, $B = 15$ for
$n < 10000$, and $B = 10$ otherwise.
This method is excluded on datasets with $N > 20{,}000$ due to
computational cost.

\paragraph{Margin ($\delt_\text{M}$).}
Inspired by support vector machines~\cite{cortes1995svm}, we define
$\delt$ as the distance from $\thet^*$ to the nearest cross-class
training example (the ``support vector'' at this node):
\begin{equation}
\delt_\text{M} = \min\!\left(
  \min_{i: x_{if} > \thet^*, y_i = c_L} (x_{if} - \thet^*),\;
  \min_{i: x_{if} \le \thet^*, y_i = c_R} (\thet^* - x_{if})
\right),
\label{eq:margin}
\end{equation}
where $c_L$ and $c_R$ are the dominant classes on the left and right
sides of $\thet^*$ respectively.
When the split is perfectly clean (no cross-class examples), we fall
back to the physical gap between the two adjacent data points straddling
$\thet^*$.
This method requires zero hyperparameters and $O(n)$ work per node.

\subsection{Routing Architectures}
\label{sec:routing}

\paragraph{Probabilistic routing (binary tree structure).}
The tree is trained identically to standard CART, with all examples
routed left ($\le \thet^*$) or right ($> \thet^*$) during fitting.
The value $\delt$ is computed and stored at each node for use during
prediction only.
At prediction time, an instance $\mathbf{x}$ arriving at a node with
feature $f$, threshold $\thet$, and uncertainty zone $\delt$ is routed
as follows:
\begin{itemize}
\item $x_f \le \thet - \delt$: route left (decisive)
\item $x_f > \thet + \delt$: route right (decisive)
\item otherwise: propagate to \emph{both} children with weights
  $w_L = (\thet + \delt - x_f)/(2\delt)$ and
  $w_R = 1 - w_L$, combining probability outputs as
  $\hat{p} = w_L \hat{p}_L + w_R \hat{p}_R$.
\end{itemize}
The instance is flagged as boundary-uncertain if it enters the weighted
combination at any node.

\paragraph{Hard middle branch (trinary tree structure).}
A third child is added to each node.
During \emph{training}, examples with $\thet - \delt < x_f \le \thet
+ \delt$ are routed to this middle child and used to train a separate
subtree; left and right subtrees are trained on examples outside the
uncertainty zone.
During \emph{prediction}, the instance is routed to whichever of the
three branches contains its feature value and is flagged as
boundary-uncertain if it enters the middle branch.

\paragraph{Equivalence of probabilistic and deferred routing.}
We also tested \emph{deferred routing}, in which the weighted
combination is resolved using hard binary routing within child subtrees.
Across all 72 CC18 datasets and all five $\delt$ methods, probabilistic
and deferred routing produced numerically identical results (to four
decimal places).
At depth $\le 4$, the probability of a single instance traversing two or
more uncertainty zones on the same path is negligible in practice.
We report only probabilistic routing in all experiments.

\subsection{Prediction and Boundary-Uncertain Flagging}

A standard class prediction is always produced by taking the argmax of
the (possibly blended) leaf probability vector.
The ternary verdict is \emph{not abstention}: it is a qualifier on the
prediction indicating whether the path to the leaf was decisive or
passed through a weighted combination.
Formally:
\begin{equation}
\text{verdict}(\mathbf{x}) = \begin{cases}
  \TRUE  & \text{if no uncertainty zone was traversed} \\
  \UND   & \text{if at least one uncertainty zone was traversed}
\end{cases}
\label{eq:verdict}
\end{equation}

We evaluate performance on \emph{decided} instances (verdict $= \TRUE$)
separately from all instances, using two primary metrics:
\textbf{Decided Accuracy} (accuracy restricted to decided instances) and
\textbf{Boundary-Uncertain Rate} (fraction of instances receiving verdict
$\UND$).

\section{Theoretical Analysis}
\label{sec:theory}

We establish four formal properties of ternary decision trees:
an accuracy decomposition, a sufficiency condition for decided
accuracy improvement, an exact characterisation of the efficiency
metric, and asymptotic consistency of the margin method.

\subsection{Accuracy Decomposition}
\label{sec:theory_decomp}

\begin{proposition}[Accuracy Decomposition]
\label{prop:decomp}
Let $u \in [0,1)$ be the boundary-uncertain rate,
$\decacc$ the decided accuracy,
$\mathrm{Acc}_u$ the accuracy on boundary-uncertain instances, and
$\mathrm{Acc}$ the overall accuracy on all instances.
Then
\begin{equation}
\mathrm{Acc} \;=\; (1-u)\,\decacc + u\,\mathrm{Acc}_u.
\label{eq:decomp}
\end{equation}
\end{proposition}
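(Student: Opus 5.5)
This is the law of total probability on a finite evaluation set, so the plan is a direct counting argument. Fix the evaluation set of $M$ instances. By Equation~\eqref{eq:verdict}, every instance receives exactly one verdict, $\TRUE$ or $\UND$. This partitions the set into a decided part $D$ with $|D| = (1-u)M$ and a boundary-uncertain part $U$ with $|U| = uM$. Let $C$ denote the set of correctly classified instances, judged by the argmax of the possibly blended probability vector. The definitions then read
\[
\mathrm{Acc} = \frac{|C|}{M}, \qquad
\decacc = \frac{|C\cap D|}{|D|}, \qquad
\mathrm{Acc}_u = \frac{|C\cap U|}{|U|}.
\]

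The key step is to use disjointness: $C = (C\cap D)\,\dot\cup\,(C\cap U)$, so $|C| = |C\cap D| + |C\cap U|$. Dividing by $M$, and writing $|C\cap D| = |D|\,\decacc$ and $|C\cap U| = |U|\,\mathrm{Acc}_u$, gives
\[
\mathrm{Acc} = \frac{|D|}{M}\,\decacc + \frac{|U|}{M}\,\mathrm{Acc}_u = (1-u)\,\decacc + u\,\mathrm{Acc}_u,
\]
which is \eqref{eq:decomp}. The population version follows the same way: condition on the verdict event and apply the law of total probability, $\Pr[\hat y = y] = \Pr[\TRUE]\Pr[\hat y=y\mid\TRUE] + \Pr[\UND]\Pr[\hat y=y\mid \UND]$.

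The only delicate point is definitional rather than technical. The hypothesis $u<1$ guarantees $|D|>0$, so $\decacc$ is well defined. When $u=0$, $\mathrm{Acc}_u$ is undefined, but it appears multiplied by $u=0$; I would adopt the convention that this product is zero, or assign $\mathrm{Acc}_u$ any value. In that case the identity reduces to $\mathrm{Acc}=\decacc$. I would also note explicitly that boundary-uncertain instances still receive a class prediction, so ``correct'' is defined for every instance and $C$ is a genuine subset of the whole evaluation set.
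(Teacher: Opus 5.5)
Your proof is correct and matches the paper's argument: partition the evaluation set into decided and boundary-uncertain subsets, split the count of correct predictions across the two parts, and divide by the total. Your handling of the $u=0$ case, where $\mathrm{Acc}_u$ is undefined but multiplied by zero, is a point the paper's proof leaves implicit.
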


\begin{proof}
Partition the test set of $N$ instances into the decided subset
$\mathcal{D}$ ($|\mathcal{D}| = (1-u)N$) and the boundary-uncertain
subset $\mathcal{U}$ ($|\mathcal{U}| = uN$).
By definition of accuracy:
\begin{align*}
\mathrm{Acc}
  &= \frac{|\{i \in \mathcal{D}: \hat{y}_i = y_i\}|
         + |\{i \in \mathcal{U}: \hat{y}_i = y_i\}|}{N}\\
  &= \frac{(1-u)N\,\decacc + uN\,\mathrm{Acc}_u}{N}
  = (1-u)\,\decacc + u\,\mathrm{Acc}_u. \qed
\end{align*}
\end{proof}

\subsection{Sufficiency Condition for Decided Accuracy Improvement}
\label{sec:theory_sufficiency}

\begin{proposition}[Sufficiency Condition]
\label{prop:sufficiency}
Let $\mathrm{Acc}_{\mathrm{CART}}$ denote the overall accuracy of the
CART baseline.
Assume overall accuracy is preserved under ternary routing:
$\mathrm{Acc} \approx \mathrm{Acc}_{\mathrm{CART}}$.
Then the ternary decision tree achieves higher decided accuracy than
CART if and only if the accuracy on boundary-uncertain instances is
strictly below the CART baseline accuracy:
\begin{equation}
\decacc > \mathrm{Acc}_{\mathrm{CART}}
\;\iff\;
\mathrm{Acc}_u < \mathrm{Acc}_{\mathrm{CART}}.
\label{eq:sufficiency}
\end{equation}
\end{proposition}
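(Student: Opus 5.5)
The plan is to reduce the claim to Proposition~\ref{prop:decomp}, replacing the approximate hypothesis $\mathrm{Acc} \approx \mathrm{Acc}_{\mathrm{CART}}$ by exact equality $\mathrm{Acc} = \mathrm{Acc}_{\mathrm{CART}}$. The ``if and only if'' only makes sense under this idealisation, so I will state it explicitly. I will also assume $u \in (0,1)$: when $u = 0$ both sides of \eqref{eq:sufficiency} concern a vacuous or degenerate boundary set, and the equivalence is not meaningful.

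First, write $A := \mathrm{Acc}_{\mathrm{CART}} = \mathrm{Acc}$. By Proposition~\ref{prop:decomp},
\[
A = (1-u)\,\decacc + u\,\mathrm{Acc}_u .
\]
Since $A = (1-u)A + uA$, subtracting gives the key identity
\[
(1-u)\bigl(\decacc - A\bigr) = u\bigl(A - \mathrm{Acc}_u\bigr).
\]
Because $1-u > 0$ and $u > 0$, the two bracketed quantities have the same sign. Hence $\decacc - A > 0$ exactly when $A - \mathrm{Acc}_u > 0$, which is \eqref{eq:sufficiency}. Both directions come from this single identity, so no separate argument for each direction is needed.

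Finally, I will comment on the approximate version, and this is the step I expect to be the main obstacle: as literally stated, ``$\approx$'' does not support a sharp equivalence. To handle it I would set $\mathrm{Acc} = A + \epsilon$. The identity then becomes
\[
(1-u)(\decacc - A) = u(A - \mathrm{Acc}_u) + \epsilon .
\]
So the equivalence holds robustly whenever $|\epsilon| < u\,|A - \mathrm{Acc}_u|$. In words, the perturbation in overall accuracy must be smaller than the accuracy gap on the flagged fraction. In the exact case $\epsilon = 0$ this recovers the statement.
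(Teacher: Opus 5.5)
Your proposal is correct and takes essentially the same route as the paper. Both arguments replace $\mathrm{Acc} \approx \mathrm{Acc}_{\mathrm{CART}}$ by exact equality, apply Proposition~\ref{prop:decomp}, and reduce the claim to the sign of $u(\mathrm{Acc}_{\mathrm{CART}} - \mathrm{Acc}_u)$ using $u>0$: the paper solves for $\decacc$ and chains inequalities, while you use the single identity $(1-u)(\decacc - A) = u(A - \mathrm{Acc}_u)$. Your $\epsilon$-perturbation remark, which quantifies when the equivalence survives the approximate hypothesis, is a useful addition that the paper does not make.
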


\begin{proof}
Applying Proposition~\ref{prop:decomp} under the assumption
$\mathrm{Acc} = \mathrm{Acc}_{\mathrm{CART}}$:
\begin{equation*}
\decacc = \frac{\mathrm{Acc}_{\mathrm{CART}} - u\,\mathrm{Acc}_u}{1-u}.
\end{equation*}
Then $\decacc > \mathrm{Acc}_{\mathrm{CART}}$ iff
\begin{align*}
\mathrm{Acc}_{\mathrm{CART}} - u\,\mathrm{Acc}_u
  &> \mathrm{Acc}_{\mathrm{CART}}\,(1-u)\\
u\,\mathrm{Acc}_{\mathrm{CART}} &> u\,\mathrm{Acc}_u\\
\mathrm{Acc}_u &< \mathrm{Acc}_{\mathrm{CART}},
\end{align*}
where the last step uses $u > 0$. \qed
\end{proof}

\begin{remark}
Proposition~\ref{prop:sufficiency} has a direct practical
interpretation: decided accuracy improvement over CART requires the
boundary-uncertain instances to be harder than the CART baseline
average.
A delta method that flags genuinely hard instances (high local $p_e$,
near the true decision boundary) satisfies this condition.
A method that flags easy instances (low $p_e$) does not.
This provides theoretical grounding for the empirical finding that
class-overlap and gain-ratio achieve nominally high decided accuracy
by routing the majority of instances through the boundary-uncertain
zone rather than by precisely identifying hard cases
(Section~\ref{sec:cc18}).
\end{remark}

\subsection{Efficiency Characterisation}
\label{sec:theory_efficiency}

We define the \emph{efficiency ratio}
$\eta = (\decacc - \mathrm{Acc}_{\mathrm{CART}}) / u$
as the decided accuracy gain per unit of boundary-uncertain rate.

\begin{proposition}[Efficiency Characterisation]
\label{prop:efficiency}
Under the assumption of Proposition~\ref{prop:sufficiency},
\begin{equation}
\eta \;=\; \decacc - \mathrm{Acc}_u.
\label{eq:efficiency_char}
\end{equation}
The efficiency ratio equals the accuracy gap between decided and
boundary-uncertain predictions.
\end{proposition}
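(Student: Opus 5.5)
The plan is a short algebraic substitution that combines Proposition~\ref{prop:decomp} with the preservation assumption $\mathrm{Acc} = \mathrm{Acc}_{\mathrm{CART}}$ of Proposition~\ref{prop:sufficiency}. We treat that assumption as exact equality, as was done in the proof of Proposition~\ref{prop:sufficiency}. Throughout we assume $u \in (0,1)$, since $\eta$ is only defined when $u > 0$.

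First, we replace $\mathrm{Acc}_{\mathrm{CART}}$ in the numerator of $\eta$ by the decomposition of Equation~\eqref{eq:decomp}:
\begin{equation*}
\decacc - \mathrm{Acc}_{\mathrm{CART}}
= \decacc - (1-u)\,\decacc - u\,\mathrm{Acc}_u
= u\,\decacc - u\,\mathrm{Acc}_u
= u\,(\decacc - \mathrm{Acc}_u).
\end{equation*}
Second, we divide by $u > 0$, which gives $\eta = \decacc - \mathrm{Acc}_u$ directly.

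An equivalent route is to start from the expression $\decacc = (\mathrm{Acc}_{\mathrm{CART}} - u\,\mathrm{Acc}_u)/(1-u)$ obtained in the proof of Proposition~\ref{prop:sufficiency}. Solving it for $\mathrm{Acc}_{\mathrm{CART}}$ gives the same identity, but the direct substitution above is cleaner.

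There is no real obstacle; the only points that need care are side conditions. The first is the requirement $u > 0$ for the division, which matches the definition of $\eta$. The second is that the approximate preservation assumption $\mathrm{Acc} \approx \mathrm{Acc}_{\mathrm{CART}}$ must be taken as exact equality for an exact identity. To address the second point, I would add a remark on the unapproximated version. Writing $\mathrm{Acc} = \mathrm{Acc}_{\mathrm{CART}} + \Delta$, the same computation gives $\eta = \decacc - \mathrm{Acc}_u + \Delta/u$. The characterisation is therefore exact when overall accuracy is preserved, and accurate up to $\Delta/u$ otherwise. This also makes clear that the characterisation degrades for small $u$ unless $\Delta$ is correspondingly small.
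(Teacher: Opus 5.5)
Your proof is correct and uses the same ingredients as the paper's: the decomposition of Proposition~\ref{prop:decomp}, combined with the preservation assumption taken as exact equality. Your direct substitution into the numerator, $\decacc - \mathrm{Acc}_{\mathrm{CART}} = u(\decacc - \mathrm{Acc}_u)$, is somewhat cleaner than the paper's route of writing both $\eta$ and $\decacc - \mathrm{Acc}_u$ as $(\mathrm{Acc}_{\mathrm{CART}} - \mathrm{Acc}_u)/(1-u)$ and comparing, and your remark that $\eta = \decacc - \mathrm{Acc}_u + \Delta/u$ when $\mathrm{Acc} = \mathrm{Acc}_{\mathrm{CART}} + \Delta$ is also correct and usefully quantifies the error hidden by the paper's ``$\approx$''.
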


\begin{proof}
From Proposition~\ref{prop:sufficiency}:
$\decacc - \mathrm{Acc}_{\mathrm{CART}}
= u(\mathrm{Acc}_{\mathrm{CART}} - \mathrm{Acc}_u)\,/\,(1-u)$,
so $\eta = (\mathrm{Acc}_{\mathrm{CART}} - \mathrm{Acc}_u)\,/\,(1-u)$.
Independently, $\decacc - \mathrm{Acc}_u
= (\mathrm{Acc}_{\mathrm{CART}} - \mathrm{Acc}_u)\,/\,(1-u)$.
Therefore $\eta = \decacc - \mathrm{Acc}_u$. \qed
\end{proof}

\begin{remark}
Equation~\eqref{eq:efficiency_char} gives the efficiency ratio a
concrete interpretation: it measures how much more accurately
the tree classifies decided instances relative to boundary-uncertain
ones.
High $\eta$ indicates that the delta method successfully separates
easy and hard instances; $\eta \le 0$ indicates the flagged
instances are no harder than the decided set.
Since the reported tables include $\mathrm{Acc}$, $\decacc$, and $u$,
the boundary-uncertain accuracy $\mathrm{Acc}_u$ is recoverable via
Proposition~\ref{prop:decomp}:
$\mathrm{Acc}_u = (\mathrm{Acc} - (1-u)\,\decacc)\,/\,u$,
allowing the efficiency characterisation to be verified directly
from reported values.
\end{remark}

\subsection{Asymptotic Consistency of the Margin Method}
\label{sec:theory_consistency}

\begin{proposition}[Asymptotic Consistency]
\label{prop:consistency}
Let $n_v$ denote the number of training examples at node $v$.
As $n_v \to \infty$, the margin-based $\delt_M \to 0$ in probability,
provided the class-conditional feature densities are bounded and
non-zero in a neighbourhood of $\thet^*$.
Consequently, the margin-based ternary tree recovers standard CART
behaviour asymptotically and inherits its consistency guarantees.
\end{proposition}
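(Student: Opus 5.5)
The plan is to show that $\delt_M$ is bounded by the distance from $\thet^*$ to the nearest training point of a suitable class, and then use an elementary spacing argument for i.i.d.\ samples with positive density. Fix node $v$ and write $\thet^*$ for its limiting optimal threshold. I treat $\thet^*$ as fixed, conditioning on the split, and return to the data-dependence of the empirical threshold at the end.

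By the hypothesis, there is a neighbourhood $(\thet^*-r,\thet^*+r)$ on which both class-conditional densities $g_c$ satisfy $0 < m \le g_c \le M$. Let $\pi_c>0$ be the class proportions at $v$. In the non-clean case, Equation~\eqref{eq:margin} gives a bound with some cross-class point, say of class $c_R$ on the left:
\begin{equation*}
\delt_M \;\le\; \min\{\thet^* - x_{if} : x_{if}\le\thet^*,\ y_i = c_R\}.
\end{equation*}
In the clean fallback case, $\delt_M$ is the gap between the two data points straddling $\thet^*$. That gap is bounded above by the sum of the distances from $\thet^*$ to the nearest point on each side.

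In both cases it suffices to show the following. For each $\epsilon\in(0,r)$ and each relevant class $c$, the event that no class-$c$ training example lies in $(\thet^*-\epsilon,\thet^*]$ has probability tending to $0$. Each example independently lands there with class $c$ with probability at least $p_\epsilon=\pi_c m\epsilon>0$. So the event has probability at most $(1-p_\epsilon)^{n_v}\le e^{-n_v p_\epsilon}\to 0$. This yields $\Pr[\delt_M>\epsilon]\to 0$ in the non-clean case, and $\Pr[\delt_M>2\epsilon]\to0$ in the clean case. The clamp at $25\%$ of the range only decreases $\delt_M$, so it is harmless. The dominant classes $c_L$ and $c_R$ converge to their population versions as well, since with probability tending to one the empirical majorities agree with the population ones when these are unique.

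The main obstacle is that $\thet^*$ is itself estimated from the same data, so the sample is not independent of the target point. I would handle this uniformly. First, cover a small interval around the population threshold by finitely many subintervals of length $\epsilon/2$. Next, show that with probability tending to one every subinterval contains points of each class, by a union bound over finitely many events of the above type. Finally, use consistency of the empirical CART threshold (an argmax-of-empirical-process result that I would assume) to place $\hat\thet^*$ in this neighbourhood with high probability. Then any $\hat\thet^*$ there has a class-$c$ point within $\epsilon$ on each side.

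For the ``consequently'' clause: since $\delt_M\to0$ in probability, for a test point with a continuous feature distribution the probability of landing in any zone of width $2\delt_M$ tends to zero. So the routing disagrees with hard CART routing on a set of vanishing probability. At nodes with depth at most finitely many levels, a union bound over the finitely many nodes shows the ternary predictor coincides with CART except on a vanishing-probability event. Hence any consistency guarantee CART enjoys transfers, modulo the regime where $n_v\to\infty$ at every node.
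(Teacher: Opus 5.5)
Your proposal follows the paper's proof sketch: positive class-conditional densities near $\thet^*$ place cross-class examples arbitrarily close to the threshold with probability tending to one, so $\delt_M\to0$, the zones carry vanishing test mass, and the ternary tree reduces to CART. Your bound $e^{-n_v\pi_c m\epsilon}$ is the quantitative form of the first step and, at a fixed threshold, gives the paper's $O_P(n_v^{-1})$ rate. You are more careful than the paper in handling the data-dependence of $\thet^*$ by a covering argument. However, your finite union bound over nodes, like the paper's appeal to $u\to0$, only settles the fixed-depth case. Inheriting CART's growing-depth consistency would also require the zone probabilities, summed over the $d_n$ levels of the tree, to vanish.
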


\begin{proof}[Proof sketch]
The margin $\delt_M$ equals the distance from $\thet^*$ to the
nearest cross-class training example.
By hypothesis each class has positive density near $\thet^*$,
so both classes have examples on both sides of $\thet^*$ with
probability approaching 1 as $n_v \to \infty$.
The minimum cross-class distance converges to zero in probability
at rate $O(n_v^{-1})$.
As $\delt_M \to 0$ the boundary-uncertain rate $u \to 0$ and the
ternary tree reduces to standard CART, whose consistency under
growing-depth conditions (depth $d_n \to \infty$, $d_n = o(\log n)$)
is established in~\cite{devroye1996probabilistic}. \qed
\end{proof}

\begin{remark}
Proposition~\ref{prop:consistency} does not hold for class-overlap
or gain-ratio in general, since those methods may produce $\delt > 0$
even when classes are perfectly separable at the node.
Node-bootstrap shares the consistency property under regularity
conditions (bootstrap threshold variance converges to zero as
$n_v \to \infty$ when the optimal threshold is unique).
The practical experiments in Section~\ref{sec:experiments} use
fixed-depth trees ($\texttt{max\_depth}=4$), which do not satisfy
the growing-depth condition required for Bayes consistency of the
complete tree; Proposition~\ref{prop:consistency} characterises the
structural relationship between the ternary and CART trees at each
individual node as $n_v \to \infty$, independently of the global
depth-growth schedule.
\end{remark}

\section{Experiments}
\label{sec:experiments}

\subsection{Datasets and Collections}

We evaluate across three benchmark collections covering 78 datasets in total.

\paragraph{OpenML-CC18.}
The curated benchmark suite of 72 classification
tasks~\cite{bischl2021cc18} covers diverse dataset sizes
($N \in [10, 130{,}000]$), dimensionalities ($d \in [3, 3{,}072]$),
and class structures ($K \in [2, 10]$).
One dataset (task 167124, raw image pixels, $d = 3{,}072$) was excluded
as axis-aligned tree splits are not meaningful on flattened pixel arrays,
leaving 71 evaluated datasets.

\paragraph{Breiman synthetic benchmarks.}
Three datasets with analytically-known Bayes errors from OpenML:
\emph{waveform-5000} ($N = 5{,}000$, $d = 21$, $K = 3$,
Bayes error $\approx 14\%$)~\cite{breiman1984classification},
\emph{twonorm} ($N = 7{,}400$, $d = 20$, $K = 2$,
Bayes error $\approx 2.3\%$)~\cite{breiman1996bagging},
and \emph{ringnorm} ($N = 7{,}400$, $d = 20$, $K = 2$,
Bayes error $\approx 1.7\%$)~\cite{breiman1996bagging}.

\paragraph{Medical and financial datasets.}
Four high-stakes binary classification tasks from OpenML:
Pima diabetes ($N = 768$, $d = 8$),
German credit ($N = 1{,}000$, $d = 20$),
Cleveland heart disease ($N = 303$, $d = 13$),
and Mammography ($N = 11{,}183$, $d = 6$, severely imbalanced at $2.3\%$
positive).

\subsection{Experimental Protocol}

All experiments use 5-fold stratified cross-validation.
Features are standardised within each fold (StandardScaler fitted on
training data only).
All ternary trees use $\texttt{max\_depth} = 4$.
Node-bootstrap is excluded on datasets with $N > 20{,}000$ due to
computational cost; this affects a subset of the CC18 datasets and is
noted in Table~\ref{tab:cc18}.
We compare against \textbf{sklearn CART} (DecisionTreeClassifier,
$\texttt{max\_depth} = 4$) as the baseline.

We report \textbf{Decided Accuracy} (primary metric),
\textbf{Boundary-Uncertain Rate} (\%), \textbf{Overall Accuracy} (all
instances), and \textbf{Decided F1} (macro-averaged F1 on decided
instances).
Statistical significance of decided accuracy improvements over the
baseline is assessed using the one-sided Wilcoxon signed-rank
test~\cite{demsar2006statistical} across per-dataset mean decided
accuracies.
Win/Tie/Loss (W/T/L) counts use a practical significance threshold of
$0.5$ percentage points.

\section{Results}
\label{sec:results}

\subsection{OpenML-CC18: Broad Validity}
\label{sec:cc18}

Table~\ref{tab:cc18} presents results across 71 of the 72 OpenML-CC18
datasets (one image dataset with $d=3{,}072$ excluded; see
Section~\ref{sec:experiments}).

\begin{table}[t]
\caption{Decided accuracy across OpenML-CC18 (71 of 72 datasets evaluated,
5-fold CV; one image dataset with $d=3{,}072$ raw pixel features excluded).
W/T/L = wins/ties/losses vs CART baseline on decided accuracy.
Node-bootstrap excluded where $N > 20{,}000$ due to computational cost.
All probabilistic-routing methods are significant at $p \le 0.001$
(Wilcoxon, one-sided).}
\label{tab:cc18}
\centering
\small
\begin{tabular}{llccccr}
\toprule
Method & Routing & Dec.Acc & $\pm$std & Undec\% & Acc.All & W/T/L \\
\midrule
Class-Overlap  & prob. & 0.7620 & 0.207 & 71.3 & 0.7177 & 48/7/17 \\
Gain-Ratio     & prob. & 0.7583 & 0.209 & 58.2 & 0.7186 & 45/14/13 \\
Node-Bootstrap & h.m.  & 0.7425 & 0.193 & 17.6 & 0.7280 & 38/17/7 \\
Node-Bootstrap & prob. & 0.7421 & 0.193 & 31.9 & 0.7336 & 36/11/15 \\
\textbf{Margin} & \textbf{prob.} & \textbf{0.7397} & \textbf{0.193} & \textbf{16.7} & \textbf{0.7220} & \textbf{42/21/9} \\
Quality-Plateau & prob. & 0.7337 & 0.189 & 17.4 & 0.7234 & 37/29/6 \\
CART (baseline) & ref.  & 0.7217 & 0.194 &  0.0 & 0.7217 & ref. \\
Margin & h.m.  & 0.7203 & 0.202 &  9.7 & 0.7136 & 35/23/14 \\
Quality-Plateau & h.m.  & 0.7128 & 0.224 & 11.0 & 0.7050 & 35/27/10 \\
Gain-Ratio     & h.m.  & 0.7015 & 0.238 & 27.1 & 0.6838 & 33/16/23 \\
Class-Overlap  & h.m.  & 0.6785 & 0.280 & 32.9 & 0.6350 & 36/9/27 \\
\bottomrule
\end{tabular}
\end{table}

\begin{figure}[t]
\centering
\includegraphics[width=0.92\textwidth]{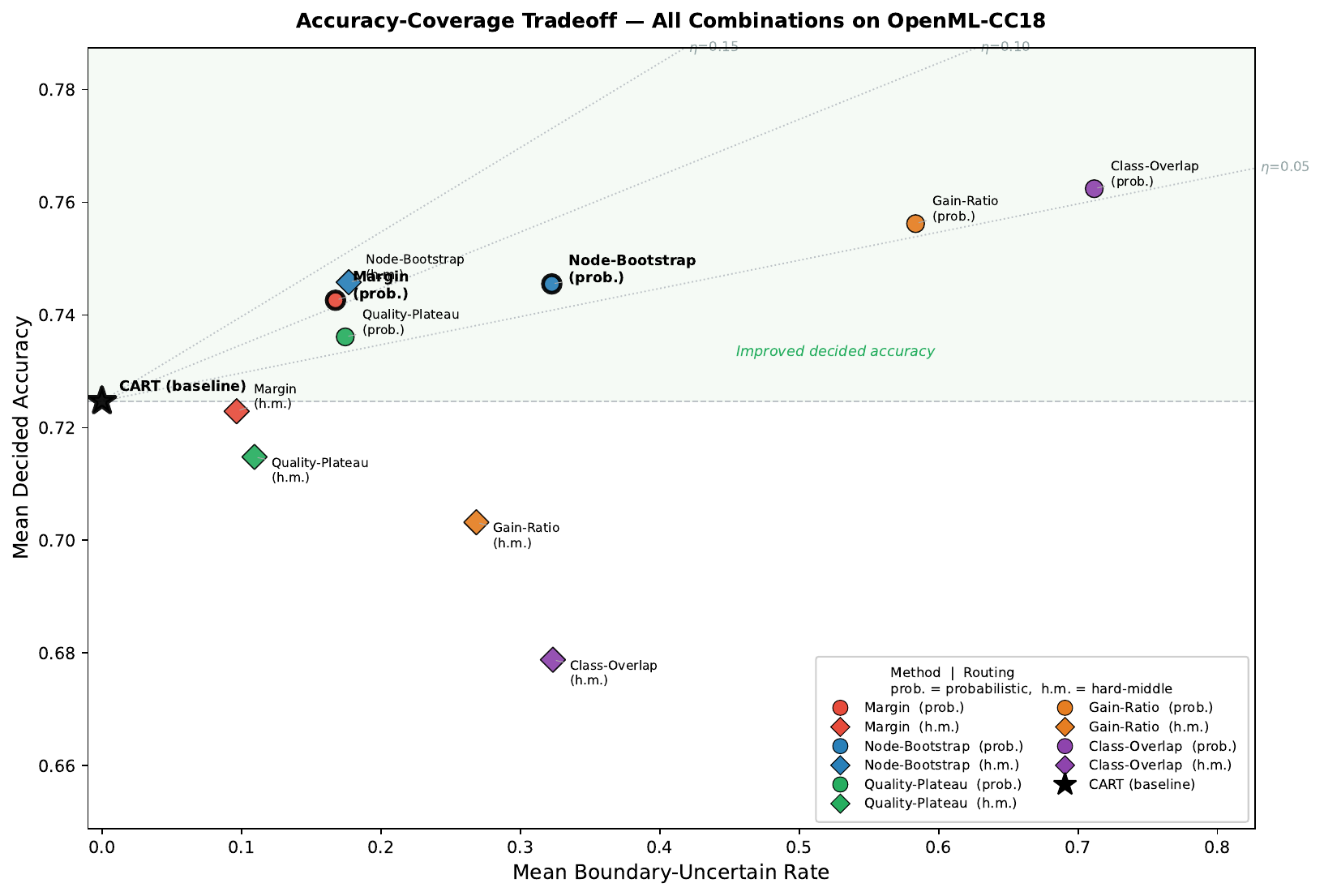}
\caption{Accuracy-coverage tradeoff across 71 of the 72 OpenML-CC18 datasets
(5-fold CV means).
Each point represents one (delta method, routing) combination at its
mean boundary-uncertain rate ($x$-axis) and mean decided accuracy
($y$-axis).
The dashed horizontal line marks the CART baseline decided accuracy.
Efficiency isolines ($\eta$ = decided accuracy gain per unit of
boundary-uncertain rate) show that methods above and to the left
of the baseline deliver improvement with lower abstention cost.
Circles denote probabilistic routing; diamonds denote hard-middle routing.
The margin method (red circle) achieves the best efficiency
($\eta = 0.104$), combining competitive decided accuracy with a
moderate boundary-uncertain rate of 16.7\%.
Class-overlap and gain-ratio (far right) achieve high decided
accuracy by routing the majority of instances through the
uncertainty zone, as reflected by their low efficiency scores.}
\label{fig:tradeoff}
\end{figure}

\paragraph{Significance.}
All five $\delt$ methods with probabilistic routing significantly
outperform the CART baseline on decided accuracy ($p \le 0.001$,
Wilcoxon signed-rank, one-sided), confirming that ternary decision trees
with locally-adaptive uncertainty zones provide a statistically robust
improvement across diverse datasets.

\paragraph{Class-Overlap and Gain-Ratio.}
Although class-overlap and gain-ratio achieve nominally higher decided
accuracy (0.762 and 0.758 respectively) than margin (0.740), this is
attributable to selective routing: by flagging 71\% and 58\% of
instances as boundary-uncertain respectively, these methods classify only
the easiest instances decisively.
We quantify this with the efficiency metric
$\eta = (\text{Dec-Acc} - \text{CART-Acc}) / \text{Undec\%}$:
class-overlap achieves $\eta = 0.068$ and gain-ratio $\eta = 0.072$,
compared to $\eta = 0.104$ for margin.
These methods are therefore not recommended for practical use in their
current parameterisation.

\paragraph{Margin is the recommended method.}
The margin method achieves the highest efficiency ($\eta = 0.104$),
flags only 16.7\% of instances as boundary-uncertain, wins on 42 of 72
datasets ($p < 0.001$), and requires zero additional hyperparameters.
This combination of performance, parsimony, and theoretical grounding
from SVM margin theory makes it the recommended $\delt$ method.

\paragraph{Hard-middle routing.}
Probabilistic routing outperforms hard-middle routing for 4 of 5
$\delt$ methods.
In hard-middle routing, boundary-uncertain examples are removed from the
left and right training sets, reducing the quality of those branches.
The resulting negative efficiency ($\eta < 0$ for all hard-middle
combinations on CC18) confirms that this structural cost exceeds any
benefit on diverse datasets.
Node-bootstrap is the exception, performing comparably under both routing
strategies.

\subsection{Breiman Benchmarks: Theoretical Analysis}
\label{sec:breiman}

Table~\ref{tab:breiman} presents results on the three Breiman synthetic
datasets.
We introduce the \textbf{Undecided/Bayes (U/B) ratio} as a diagnostic:
$\text{U/B} = \text{Undec\%} / \text{Bayes error}$.
A ratio close to 1.0 indicates the method flags approximately the
fraction of instances that are theoretically ambiguous; ratios much
greater than 1 indicate over-flagging.

\begin{table}[t]
\caption{Breiman synthetic benchmarks with analytically-known Bayes errors.
U/B = Boundary-Uncertain Rate / Bayes Error. A ratio closer to 1.0
indicates better alignment with the theoretically ambiguous region.
No method achieves calibration, motivating future work on $\delta$
scaling.}
\label{tab:breiman}
\centering
\small
\begin{tabular}{lcccccc}
\toprule
& \multicolumn{2}{c}{waveform (BE=14\%)}
& \multicolumn{2}{c}{twonorm (BE=2.3\%)}
& \multicolumn{2}{c}{ringnorm (BE=1.7\%)} \\
\cmidrule(lr){2-3}\cmidrule(lr){4-5}\cmidrule(lr){6-7}
Method & Dec.Acc & U/B & Dec.Acc & U/B & Dec.Acc & U/B \\
\midrule
Node-Bootstrap & 0.7845 & 2.57 & 0.8245 & 15.64 & 0.7688 & 11.77 \\
\textbf{Margin} & \textbf{0.7377} & \textbf{0.09} & \textbf{0.7774} & \textbf{0.42} & \textbf{0.7648} & \textbf{0.39} \\
Quality-Plateau & 0.7711 & 2.54 & 0.8079 & 13.49 & 0.7879 & 10.68 \\
CART (baseline) & 0.7350 & ref. & 0.7759 & ref. & 0.7647 & ref. \\
\bottomrule
\end{tabular}
\end{table}

\paragraph{Margin is self-calibrating on clean data.}
The margin method produces U/B ratios of 0.09, 0.42, and 0.39 on the
three datasets, all well below 1.0.
On geometrically structured synthetic data, the SVM-inspired gap between
class support vectors at each node tends to be very small because both
classes densely populate the feature space near the boundary.
Margin correctly identifies almost no instances as threshold-uncertain,
producing predictions nearly identical to CART.
This is desirable behaviour: a method that aggressively flags uncertainty
on geometrically clean data would produce false signals.

\paragraph{Node-bootstrap and quality-plateau are proportionally better calibrated.}
Node-bootstrap and quality-plateau achieve U/B values of 2.4 to 2.6 on
waveform (the dataset with the largest Bayes error at 14\%) compared to
values of 10 to 16 on twonorm and ringnorm (Bayes errors of 2.3\% and
1.7\%).
While all ratios exceed 1.0, the relative ordering is correct: these
methods flag proportionally more instances on harder datasets.

\paragraph{No method achieves calibration.}
No $\delt$ method achieves U/B $\approx 1.0$, confirming that local
threshold uncertainty as computed from split statistics does not
directly correspond to the Bayes-uncertain fraction of the dataset.
This is expected: the Bayes error characterises the joint distribution
in the full feature space, while $\delt$ characterises uncertainty about
individual univariate split thresholds.
We identify calibration of $\delt$ to target the Bayes error boundary
as a productive direction for future work, for example via the
Cover-Hart $k$-NN Bayes error bound~\cite{coverhart1967}.

\subsection{Medical and Financial Datasets: Domain Value}
\label{sec:medical}

Table~\ref{tab:medical} presents results on four high-stakes datasets
for the three recommended methods and the baseline.

\begin{table}[t]
\caption{Medical and financial datasets.
Dec.Acc = accuracy on decisively classified instances.
Undec\% = fraction of instances classified as boundary-uncertain
(these instances still receive a prediction, formed by weighted subtree
blending).
F1-Dec = macro-averaged F1 on decisive instances.}
\label{tab:medical}
\centering
\small
\begin{tabular}{llcccc}
\toprule
Dataset & Method & Dec.Acc & Undec\% & Acc.All & F1-Dec \\
\midrule
\multirow{4}{*}{credit-g}
 & Node-Bootstrap & 0.7677 & 71.0 & 0.7180 & 0.614 \\
 & \textbf{Margin} & \textbf{0.6808} & \textbf{48.1} & \textbf{0.7130} & \textbf{0.584} \\
 & Quality-Plateau & 0.7126 & 7.7 & 0.7130 & 0.607 \\
 & CART (baseline) & 0.7130 & 0.0 & 0.7130 & 0.605 \\
\midrule
\multirow{4}{*}{diabetes}
 & Node-Bootstrap & 0.8363 & 63.4 & 0.7499 & 0.814 \\
 & \textbf{Margin} & \textbf{0.7418} & \textbf{9.5} & \textbf{0.7343} & \textbf{0.715} \\
 & Quality-Plateau & 0.7772 & 29.4 & 0.7408 & 0.751 \\
 & CART (baseline) & 0.7382 & 0.0 & 0.7382 & 0.714 \\
\midrule
\multirow{4}{*}{heart-c}
 & Node-Bootstrap & 0.7102 & 60.7 & 0.7407 & 0.640 \\
 & \textbf{Margin} & \textbf{0.7744} & \textbf{58.9} & \textbf{0.7296} & \textbf{0.508} \\
 & Quality-Plateau & 0.7296 & 0.0 & 0.7296 & 0.723 \\
 & CART (baseline) & 0.7370 & 0.0 & 0.7370 & 0.730 \\
\midrule
\multirow{4}{*}{mammography}
 & Node-Bootstrap & 0.9915 & 10.8 & 0.9848 & 0.764 \\
 & \textbf{Margin} & \textbf{0.9867} & \textbf{0.7} & \textbf{0.9845} & \textbf{0.769} \\
 & Quality-Plateau & 0.9880 & 5.5 & 0.9847 & 0.789 \\
 & CART (baseline) & 0.9844 & 0.0 & 0.9844 & 0.775 \\
\bottomrule
\end{tabular}
\end{table}

\paragraph{Mammography: strongest practical result.}
On the severely imbalanced mammography dataset (2.3\% positive rate),
node-bootstrap flags 10.8\% of screening cases as boundary-uncertain
while achieving $+0.71\%$ decided accuracy over CART.
In a clinical screening context, these flagged cases correspond to
radiological findings near the classification boundary, precisely where
a second review or additional diagnostic testing is most beneficial.
Quality-plateau achieves the best F1-Dec (0.789), indicating better
handling of the class imbalance on the decisive subset.

\paragraph{Diabetes: large gains at the cost of high boundary-uncertain rates.}
Node-bootstrap achieves $+9.8\%$ decided accuracy over the 0.738
baseline but flags 63.4\% of patients as boundary-uncertain.
Margin achieves a more practical operating point: $+0.4\%$ decided
accuracy with only 9.5\% boundary-uncertain flagging.
For clinical screening where the cost of flagging is high,
margin's conservative operating point is preferable.

\paragraph{Heart disease: small dataset limitation.}
On the Cleveland heart disease dataset ($N = 303$), node-bootstrap falls
below the baseline decided accuracy (0.710 vs.\ 0.737).
With only approximately 240 training examples per fold, bootstrap-based
threshold variance estimates are unreliable.
We recommend a minimum sample threshold of $N \ge 500$ for the
node-bootstrap method.
Quality-plateau's decided accuracy closely matches the baseline (0.730
vs.\ 0.737) without requiring a minimum sample size.
Margin achieves the highest decided accuracy (0.774) on this dataset,
though at a high boundary-uncertain rate (58.9\%) that limits practical
utility.

\paragraph{German credit: margin limitation.}
Margin is the only method and the only dataset in our experiments where
decided accuracy falls below the CART baseline (0.681 vs.\ 0.713).
German credit's 20 economic features and complex feature interactions
appear to produce split margins that do not well-characterise threshold
uncertainty.
This limitation should be considered in applications involving
small-to-medium datasets with intricate economic feature structures.

\section{Discussion}
\label{sec:discussion}

\paragraph{Practical recommendations.}
Based on our empirical analysis, we offer the following recommendations.
\textbf{Margin (prob.)} is the default choice, offering the best
efficiency, zero hyperparameters, theoretical grounding in SVM margin
theory, and self-calibrating behaviour on clean data; it is suitable for
datasets with $N \ge 200$.
\textbf{Node-bootstrap (prob.)} is recommended when maximum decided
accuracy is required and $N \ge 500$ per training fold.
Node-bootstrap adds $O(B \cdot n \log n)$ work per node where $B$
is the number of bootstrap replicates, making it substantially more
expensive than the other four delta methods which add at most $O(n)$
overhead to the standard CART split evaluation.
All other methods are computationally lightweight: quality-plateau
and gain-ratio reuse the split criterion scores already computed by
CART; margin requires a single linear scan over the node's training
examples; class-overlap requires percentile computation over class
subsets.
We exclude node-bootstrap on datasets with $N > 20{,}000$ due to
this cost; a compiled implementation would reduce this overhead
significantly.
\textbf{Quality-plateau (prob.)} is the safest choice: the most
consistent improvement across all datasets tested, never
catastrophically underperforming, and suitable for any dataset size.
\textbf{Probabilistic routing} is preferred in all cases, as hard-middle
routing is consistently inferior for 4 of 5 methods on the CC18
benchmark.

\paragraph{Theoretical grounding.}
The empirical findings are supported by formal guarantees established
in Section~\ref{sec:theory}.
Proposition~\ref{prop:sufficiency} provides the theoretical basis for
decided accuracy improvement: a ternary tree improves over CART if and
only if the boundary-uncertain instances are harder than average ---
precisely the condition satisfied by methods that flag instances with
high local error probability $p_e$ near the decision boundary.
Proposition~\ref{prop:efficiency} reveals that the efficiency
ratio $\eta$ exactly equals $\decacc - \mathrm{Acc}_u$, the accuracy
gap between decided and boundary-uncertain predictions; the empirical
efficiency scores in Section~\ref{sec:cc18} are therefore not merely
descriptive but have a precise theoretical interpretation.
The finding that class-overlap and gain-ratio achieve high decided
accuracy through excessive flagging is a formal consequence of
Proposition~\ref{prop:sufficiency}: those methods flag instances with
low $p_e$ rather than genuinely hard ones, so their high decided
accuracy reflects selective routing rather than improved classification.
Proposition~\ref{prop:consistency} guarantees that the margin-based
ternary tree recovers standard CART asymptotically, confirming that
the ternary extension does not sacrifice the consistency properties
of its predecessor.

\paragraph{The boundary-uncertain verdict in practice.}
It is important to distinguish boundary-uncertain verdicts from
abstention.
Every instance receives a class prediction.
The boundary-uncertain flag is a qualifier indicating that the prediction
was formed by weighted blending of subtree outputs rather than
deterministic routing to a single leaf.
Applications that can act on this signal will benefit most from the
ternary architecture: scheduling additional review for flagged medical
cases, routing flagged credit applications to manual underwriters, or
returning a confidence interval rather than a point prediction are
natural use cases.

\paragraph{Calibration of the boundary-uncertain rate.}
The Undecided/Bayes ratio analysis reveals that none of the five
$\delt$ methods produces a boundary-uncertain rate that aligns with the
theoretical Bayes error.
Methods that produce high decided accuracy on the Breiman benchmarks do
so by flagging substantially more instances than are theoretically
ambiguous (U/B between 2 and 16 for node-bootstrap and quality-plateau).
Deriving a scaling of $\delt$ that targets the Bayes error boundary,
for example via the Cover-Hart $k$-NN Bayes error estimate, is a
natural extension that we leave for future work.

\section{Conclusion}
\label{sec:conclusion}

We introduced ternary decision trees with locally-adaptive uncertainty
zones, providing a principled and theoretically grounded framework for
augmenting CART with per-node threshold uncertainty.
A decision-theoretic analysis characterises the optimal zone half-width
$\delt^*$ as the solution to a node-local cost-minimisation problem,
and five tractable estimation methods are proposed and evaluated as
approximations to this optimum, each targeting a different estimable
property of the local error probability $p_e(d)$.
Two routing architectures were proposed and evaluated, and four formal
theoretical properties were established: an accuracy decomposition, a
sufficiency condition for decided accuracy improvement, an exact
characterisation of the efficiency metric, and asymptotic consistency
of the margin method.

Across 71 of the 72 OpenML-CC18 datasets, all five $\delt$ methods with
probabilistic routing significantly outperform CART on decided accuracy
($p \le 0.001$, Wilcoxon signed-rank).
The margin method, which requires zero additional hyperparameters,
achieves the best efficiency ($\eta = 0.104$) and is recommended as
the default.
The formal efficiency characterisation (Proposition~\ref{prop:efficiency})
shows that methods with low efficiency flag easy rather than hard
instances, providing a theoretical basis for the empirical finding that
class-overlap and gain-ratio improve decided accuracy through selective
routing rather than improved classification.
Analysis on Breiman synthetic benchmarks with known Bayes errors reveals
that no method precisely calibrates its boundary-uncertain flagging rate
to the theoretical ambiguity region, establishing $\delt$ calibration as
a concrete open problem.

This work establishes the ternary tree as a theoretically grounded
primitive for uncertainty-aware classification.
Natural extensions include ensemble methods for ternary trees (analogous
to random forests), $\delt$ calibration targeting the Bayes error, and
ternary rule extraction for interpretable classification with
boundary-uncertain verdicts.

\section*{AI Tools Disclosure}
Large language model tools were used to assist with manuscript 
preparation, including grammar refinement and proofreading. 
All scientific content, experimental design, analysis, 
and conclusions are the sole work of the authors.
\bibliographystyle{splncs04}
\bibliography{references}

\begin{thebibliography}{10}
\providecommand{\url}[1]{\texttt{#1}}
\providecommand{\urlprefix}{URL }
\providecommand{\doi}[1]{https://doi.org/#1}

\bibitem{avron2008roughkleene}
Avron, A., Konikowska, B.: Rough sets and 3-valued logics. Studia Logica
  \textbf{90}(1),  69--92 (2008)

\bibitem{bartlett2008abstaining}
Bartlett, P.L., Wegkamp, M.H.: Classification with a reject option using a
  hinge loss. Journal of Machine Learning Research  \textbf{9},  1823--1840
  (2008)

\bibitem{bischl2021cc18}
Bischl, B., Casalicchio, G., Feurer, M., Hutter, F., Lang, M., Mantovani, R.G.,
  van Rijn, J.N., Vanschoren, J.: {OpenML} benchmarking suites. arXiv preprint
  arXiv:1708.03731  (2021)

\bibitem{breiman1996bagging}
Breiman, L.: Bagging predictors. Machine Learning  \textbf{24}(2),  123--140
  (1996)

\bibitem{breiman1984classification}
Breiman, L., Friedman, J., Olshen, R., Stone, C.: Classification and Regression
  Trees. Wadsworth (1984)

\bibitem{chow1970optimum}
Chow, C.K.: On optimum recognition error and reject tradeoff. IEEE Transactions
  on Information Theory  \textbf{16}(1),  41--46 (1970).
  \doi{10.1109/TIT.1970.1054406}

\bibitem{cortes1995svm}
Cortes, C., Vapnik, V.: Support-vector networks. Machine Learning
  \textbf{20}(3),  273--297 (1995)

\bibitem{coverhart1967}
Cover, T.M., Hart, P.: Nearest neighbor pattern classification. IEEE
  Transactions on Information Theory  \textbf{13}(1),  21--27 (1967).
  \doi{10.1109/TIT.1967.1053964}

\bibitem{demsar2006statistical}
Dem{\v{s}}ar, J.: Statistical comparisons of classifiers over multiple data
  sets. Journal of Machine Learning Research  \textbf{7},  1--30 (2006)

\bibitem{devroye1996probabilistic}
Devroye, L., Gy{\"o}rfi, L., Lugosi, G.: A Probabilistic Theory of Pattern
  Recognition. Springer, New York (1996). \doi{10.1007/978-1-4612-0711-5}

\bibitem{frosst2017softdt}
Frosst, N., Hinton, G.: Distilling a neural network into a soft decision tree.
  arXiv preprint arXiv:1711.09784  (2017)

\bibitem{greco2001drsa}
Greco, S., Matarazzo, B., S{\l}owi{\'n}ski, R.: Rough sets theory for
  multicriteria decision analysis. European Journal of Operational Research
  \textbf{129}(1),  1--47 (2001)

\bibitem{irsoy2012softdt}
Irsoy, O., Yildiz, O.T., Alpaydin, E.: Soft decision trees. In: Proceedings of
  the 21st International Conference on Pattern Recognition. pp. 1121--1124
  (2012)

\bibitem{kent2022indecision}
Kent, J.S., M{\'e}nager, D.H.: Indecision trees: Learning argument-based
  reasoning under quantified uncertainty. In: Synthetic Data for Artificial
  Intelligence and Machine Learning: Tools, Techniques, and Applications.
  Proceedings of SPIE, vol. 12529, pp. 296--307. SPIE (2023).
  \doi{10.1117/12.2663380}

\bibitem{olaru2003fuzzy}
Olaru, C., Wehenkel, L.: A complete fuzzy decision tree technique. Fuzzy Sets
  and Systems  \textbf{138}(2),  221--254 (2003)

\bibitem{pawlak1982roughsets}
Pawlak, Z.: Rough sets. International Journal of Computer \& Information
  Sciences  \textbf{11}(5),  341--356 (1982)

\bibitem{pedregosa2011scikit}
Pedregosa, F., et~al.: Scikit-learn: Machine learning in Python, vol.~12 (2011)

\bibitem{quinlan1993c4}
Quinlan, J.R.: {C4.5}: Programs for Machine Learning. Morgan Kaufmann (1993)

\bibitem{vanschoren2014openml}
Vanschoren, J., van Rijn, J.N., Bischl, B., Torgo, L.: {OpenML}: Networked
  science in machine learning. ACM SIGKDD Explorations Newsletter
  \textbf{15}(2),  49--60 (2014)

\bibitem{yao2010threeway}
Yao, Y.: Three-way decisions with probabilistic rough sets. Information
  Sciences  \textbf{180}(3),  341--353 (2010)

\bibitem{yao2012twdsurvey}
Yao, Y.: An outline of a theory of three-way decisions. In: Rough Sets and
  Current Trends in Computing (RSCTC 2012). Lecture Notes in Computer Science,
  vol.~7413, pp. 1--17. Springer (2012). \doi{10.1007/978-3-642-32115-3_1}

\bibitem{zaffalon2002credal}
Zaffalon, M.: The naive credal classifier. Journal of Statistical Planning and
  Inference  \textbf{105}(1),  5--21 (2002)

\bibitem{zhi2022fuzzy3wd}
Zhi, H., Leung, Y., Liu, J.: Three-way classification: Ambiguity and abstention
  in machine learning. In: Rough Sets -- International Joint Conference, IJCRS
  2019. Lecture Notes in Computer Science, vol. 11499, pp. 280--294. Springer
  (2019). \doi{10.1007/978-3-030-22815-6_22}

\end{thebibliography}

\appendix

\section*{Appendix}
\addcontentsline{toc}{section}{Appendix}

\section{Per-Dataset Results on OpenML-CC18}
\label{app:cc18_perdataset}

Table~\ref{tab:app_cc18} reports decided accuracy and 
boundary-uncertain rate for each of the 71 evaluated 
OpenML-CC18 datasets under probabilistic routing, for the 
three recommended delta methods and the CART baseline.
Full results for all 11 combinations are given in 
Appendix~\ref{app:cc18_full}.

\begin{table}[h]
	\caption{Per-dataset decided accuracy (Dec.Acc) and 
		boundary-uncertain rate (Undec\%) on OpenML-CC18.
		Results are 5-fold CV means under probabilistic routing.
		node-bootstrap excluded where $N > 20{,}000$ (shown as --).
		Datasets sorted alphabetically.}
	\label{tab:app_cc18}
	\centering
	\scriptsize
	\begin{tabular}{lcccccccc}
		\toprule
		& \multicolumn{2}{c}{Margin}
		& \multicolumn{2}{c}{Node-Bootstrap}
		& \multicolumn{2}{c}{Quality-Plateau}
		& \multicolumn{2}{c}{CART} \\
		\cmidrule(lr){2-3}\cmidrule(lr){4-5}
		\cmidrule(lr){6-7}\cmidrule(lr){8-9}
		Dataset & Dec.Acc & Undec\%
		& Dec.Acc & Undec\%
		& Dec.Acc & Undec\%
		& Acc & \\
		\midrule
		Bioresponse,OpenML-CC18 & 0.7767 & 0.076 & 0.7521 & 0.474 & 0.7696 & 0.055 & 0.7614 & 0.000 \\
		Devnagari-Script,OpenML-CC18 & 0.1462 & 0.036 & -- & -- & 0.3655 & 0.990 & 0.1491 & 0.000 \\
		Fashion-MNIST,OpenML-CC18 & 0.6522 & 0.005 & -- & -- & 0.4100 & 0.696 & 0.6511 & 0.000 \\
		GesturePhaseSegmentationProcessed,OpenML-CC18 & 0.4747 & 0.006 & 0.4575 & 0.587 & 0.5136 & 0.403 & 0.4747 & 0.000 \\
		Internet-Advertisements,OpenML-CC18 & 0.9643 & 0.018 & 0.9654 & 0.047 & 0.9653 & 0.023 & 0.9622 & 0.000 \\
		MiceProtein,OpenML-CC18 & 0.7469 & 0.155 & 0.7998 & 0.344 & 0.7837 & 0.218 & 0.7157 & 0.000 \\
		PhishingWebsites,OpenML-CC18 & 0.9280 & 0.586 & 0.9203 & 0.000 & 0.9203 & 0.000 & 0.9203 & 0.000 \\
		adult,OpenML-CC18 & 0.8624 & 0.174 & -- & -- & 0.8729 & 0.159 & 0.8440 & 0.000 \\
		analcatdata\_authorship,OpenML-CC18 & 0.9346 & 0.145 & 0.9359 & 0.157 & 0.9242 & 0.118 & 0.9191 & 0.000 \\
		analcatdata\_dmft,OpenML-CC18 & 0.2152 & 0.657 & 0.1807 & 0.677 & 0.2058 & 0.071 & 0.2082 & 0.000 \\
		balance-scale & 0.8273 & 0.445 & 0.8471 & 0.485 & 0.7650 & 0.013 & 0.7632 & 0.000 \\
		bank-marketing,OpenML-CC18 & 0.9063 & 0.050 & -- & -- & 0.8099 & 0.715 & 0.8944 & 0.000 \\
		banknote-authentication,OpenML-CC18 & 0.9656 & 0.047 & 0.9834 & 0.324 & 0.9789 & 0.199 & 0.9548 & 0.000 \\
		blood-transfusion-service-center,OpenML-CC18 & 0.7788 & 0.221 & 0.7545 & 0.872 & 0.7557 & 0.187 & 0.7754 & 0.000 \\
		breast-w & 0.9592 & 0.144 & 0.9776 & 0.227 & 0.9514 & 0.051 & 0.9313 & 0.000 \\
		car,OpenML-CC18 & 0.9322 & 0.275 & 0.8507 & 0.000 & 0.8534 & 0.014 & 0.8507 & 0.000 \\
		churn,OpenML-CC18 & 0.9284 & 0.067 & 0.9358 & 0.144 & 0.9279 & 0.070 & 0.9204 & 0.000 \\
		climate-model-simulation-crashes,OpenML-CC18 & 0.9350 & 0.096 & 0.9658 & 0.441 & 0.9315 & 0.122 & 0.9241 & 0.000 \\
		cmc,OpenML-CC18 & 0.6152 & 0.546 & 0.5737 & 0.442 & 0.5582 & 0.071 & 0.5533 & 0.000 \\
		cnae-9,OpenML-CC18 & 0.7093 & 0.606 & 0.4315 & 0.000 & 0.4315 & 0.000 & 0.4315 & 0.000 \\
		connect-4,OpenML-CC18 & 0.7565 & 0.604 & -- & -- & 0.6761 & 0.000 & 0.6761 & 0.000 \\
		credit-approval,OpenML-CC18 & 0.8572 & 0.046 & 0.9185 & 0.696 & 0.8577 & 0.023 & 0.8551 & 0.000 \\
		credit-g,OpenML-CC18 & 0.6808 & 0.481 & 0.7677 & 0.710 & 0.7126 & 0.077 & 0.7130 & 0.000 \\
		cylinder-bands,OpenML-CC18 & 0.6887 & 0.304 & 0.6645 & 0.467 & 0.6764 & 0.065 & 0.6778 & 0.000 \\
		diabetes,OpenML-CC18 & 0.7418 & 0.095 & 0.8363 & 0.634 & 0.7772 & 0.294 & 0.7382 & 0.000 \\
		dna,OpenML-CC18 & 0.8945 & 0.000 & 0.8945 & 0.000 & 0.8945 & 0.000 & 0.8945 & 0.000 \\
		dresses-sales,OpenML-CC18 & 0.5689 & 0.156 & 0.4855 & 0.810 & 0.5524 & 0.012 & 0.5520 & 0.000 \\
		electricity,OpenML-CC18 & 0.7621 & 0.005 & -- & -- & 0.7574 & 0.273 & 0.7621 & 0.000 \\
		eucalyptus,OpenML-CC18 & 0.5539 & 0.224 & 0.6044 & 0.353 & 0.5912 & 0.481 & 0.5925 & 0.000 \\
		first-order-theorem-proving,OpenML-CC18 & 0.4485 & 0.075 & 0.4505 & 0.219 & 0.4379 & 0.168 & 0.4580 & 0.000 \\
		har,OpenML-CC18 & 0.8764 & 0.006 & 0.8885 & 0.050 & 0.9100 & 0.213 & 0.8749 & 0.000 \\
		ilpd,OpenML-CC18 & 0.7015 & 0.088 & 0.7635 & 0.844 & 0.6703 & 0.134 & 0.6964 & 0.000 \\
		isolet,OpenML-CC18 & 0.4049 & 0.062 & 0.3862 & 0.088 & 0.4147 & 0.215 & 0.3991 & 0.000 \\
		jm1,OpenML-CC18 & 0.8050 & 0.165 & 0.8123 & 0.720 & 0.8245 & 0.326 & 0.8075 & 0.000 \\
		jungle\_chess\_2pcs\_raw\_endgame\_complete,OpenML-CC18 & 0.7125 & 0.526 & -- & -- & 0.7226 & 0.091 & 0.7182 & 0.000 \\
		kc1,OpenML-CC18 & 0.8588 & 0.097 & 0.8710 & 0.589 & 0.8531 & 0.158 & 0.8497 & 0.000 \\
		kc2,OpenML-CC18 & 0.8179 & 0.211 & 0.8181 & 0.724 & 0.8204 & 0.038 & 0.8238 & 0.000 \\
		kr-vs-kp & 0.9409 & 0.000 & 0.9409 & 0.000 & 0.9409 & 0.000 & 0.9409 & 0.000 \\
		letter & 0.2895 & 0.302 & 0.2497 & 0.042 & 0.2522 & 0.012 & 0.2546 & 0.000 \\
		madelon,OpenML-CC18 & 0.7371 & 0.026 & 0.7643 & 0.427 & 0.7459 & 0.144 & 0.7292 & 0.000 \\
		mfeat-factors & 0.5111 & 0.070 & 0.5290 & 0.099 & 0.5287 & 0.138 & 0.5175 & 0.000 \\
		mfeat-fourier & 0.5534 & 0.075 & 0.5625 & 0.102 & 0.5666 & 0.160 & 0.5400 & 0.000 \\
		mfeat-karhunen & 0.6151 & 0.105 & 0.6721 & 0.243 & 0.6513 & 0.208 & 0.5955 & 0.000 \\
		mfeat-morphological & 0.7805 & 0.478 & 0.6709 & 0.100 & 0.7538 & 0.488 & 0.6540 & 0.000 \\
		mfeat-pixel,OpenML-CC18 & 0.8160 & 0.332 & 0.8132 & 0.339 & 0.8258 & 0.405 & 0.7700 & 0.000 \\
		mfeat-zernike & 0.5047 & 0.052 & 0.5172 & 0.087 & 0.5488 & 0.149 & 0.4975 & 0.000 \\
		mfeat-zernike,OpenML-CC18 & 0.5503 & 0.134 & 0.5449 & 0.124 & 0.5629 & 0.165 & 0.5150 & 0.000 \\
		mnist\_784,OpenML-CC18 & 0.5728 & 0.003 & -- & -- & 0.5132 & 0.907 & 0.5721 & 0.000 \\
		nomao,OpenML-CC18 & 0.9423 & 0.076 & -- & -- & 0.9435 & 0.138 & 0.9291 & 0.000 \\
		numerai28.6,OpenML-CC18 & 0.5167 & 0.003 & -- & -- & 0.5173 & 0.162 & 0.5166 & 0.000 \\
		optdigits,OpenML-CC18 & 0.5343 & 0.071 & 0.5370 & 0.088 & 0.5426 & 0.168 & 0.5190 & 0.000 \\
		ozone-level-8hr,OpenML-CC18 & 0.9376 & 0.076 & 0.9652 & 0.514 & 0.9341 & 0.054 & 0.9313 & 0.000 \\
		pc1,OpenML-CC18 & 0.9407 & 0.105 & 0.7163 & 0.639 & 0.9319 & 0.006 & 0.9287 & 0.000 \\
		pc3,OpenML-CC18 & 0.9032 & 0.088 & 0.9309 & 0.519 & 0.8964 & 0.025 & 0.8899 & 0.000 \\
		pc4,OpenML-CC18 & 0.9064 & 0.125 & 0.7792 & 0.907 & 0.8882 & 0.006 & 0.8896 & 0.000 \\
		pendigits,OpenML-CC18 & 0.7059 & 0.140 & 0.7175 & 0.141 & 0.7312 & 0.342 & 0.7256 & 0.000 \\
		phoneme,OpenML-CC18 & 0.7942 & 0.014 & 0.8108 & 0.246 & 0.8047 & 0.188 & 0.7929 & 0.000 \\
		qsar-biodeg,OpenML-CC18 & 0.8317 & 0.218 & 0.8532 & 0.366 & 0.8425 & 0.167 & 0.8038 & 0.000 \\
		satimage,OpenML-CC18 & 0.8044 & 0.078 & 0.8247 & 0.233 & 0.8739 & 0.256 & 0.7883 & 0.000 \\
		segment,OpenML-CC18 & 0.7710 & 0.087 & 0.7700 & 0.038 & 0.7892 & 0.103 & 0.7563 & 0.000 \\
		semeion,OpenML-CC18 & 0.5775 & 0.000 & 0.5775 & 0.000 & 0.5775 & 0.000 & 0.5769 & 0.000 \\
		sick,OpenML-CC18 & 0.9908 & 0.084 & 0.9887 & 0.365 & 0.9848 & 0.003 & 0.9846 & 0.000 \\
		spambase,OpenML-CC18 & 0.8962 & 0.020 & 0.8880 & 0.399 & 0.8899 & 0.407 & 0.8937 & 0.000 \\
		splice,OpenML-CC18 & 0.9000 & 0.253 & 0.9040 & 0.087 & 0.9122 & 0.000 & 0.9122 & 0.000 \\
		steel-plates-fault,OpenML-CC18 & 0.5910 & 0.071 & 0.6054 & 0.111 & 0.6101 & 0.144 & 0.5940 & 0.000 \\
		texture,OpenML-CC18 & 0.5381 & 0.017 & 0.5412 & 0.057 & 0.5734 & 0.166 & 0.5362 & 0.000 \\
		tic-tac-toe,OpenML-CC18 & 0.9080 & 0.777 & 0.7474 & 0.000 & 0.7474 & 0.000 & 0.7474 & 0.000 \\
		vehicle,OpenML-CC18 & 0.6838 & 0.209 & 0.7332 & 0.273 & 0.6954 & 0.103 & 0.6821 & 0.000 \\
		vowel,OpenML-CC18 & 0.4883 & 0.169 & 0.5164 & 0.478 & 0.5572 & 0.327 & 0.4687 & 0.000 \\
		wall-robot-navigation,OpenML-CC18 & 0.9094 & 0.089 & 0.9061 & 0.146 & 0.9245 & 0.081 & 0.9146 & 0.000 \\
		wdbc,OpenML-CC18 & 0.9452 & 0.134 & 0.9517 & 0.195 & 0.9411 & 0.100 & 0.9227 & 0.000 \\
		wilt,OpenML-CC18 & 0.9818 & 0.014 & 0.9885 & 0.299 & 0.9831 & 0.083 & 0.9775 & 0.000 \\
		\bottomrule
	\end{tabular}
\end{table}

\section{Full Combination Results on OpenML-CC18}
\label{app:cc18_full}

Table~\ref{tab:app_cc18_full} presents decided accuracy for 
all 11 combinations averaged across the 71 CC18 datasets,
supporting the routing comparison in Section~\ref{sec:cc18}.

\begin{table}[h]
	\caption{All 11 combinations on OpenML-CC18.
		Mean decided accuracy and boundary-uncertain rate across
		71 datasets (5-fold CV). prob. = probabilistic routing,
		h.m. = hard-middle routing.}
	\label{tab:app_cc18_full}
	\centering
	\small
	\begin{tabular}{llcc}
		\toprule
		Method & Routing & Dec.Acc & Undec\% \\
		\midrule
		Class-Overlap & prob. & 0.7624 & 0.712 \\
		Gain-Ratio & prob. & 0.7562 & 0.584 \\
		Node-Bootstrap & h.m. & 0.7458 & 0.177 \\
		Node-Bootstrap & prob. & 0.7455 & 0.323 \\
		Margin & prob. & 0.7426 & 0.168 \\
		Quality-Plateau & prob. & 0.7361 & 0.174 \\
		Baseline & ref. & 0.7247 & 0.000 \\
		Margin & h.m. & 0.7229 & 0.097 \\
		Quality-Plateau & h.m. & 0.7148 & 0.109 \\
		Gain-Ratio & h.m. & 0.7032 & 0.269 \\
		Class-Overlap & h.m. & 0.6788 & 0.323 \\
		\bottomrule
	\end{tabular}
\end{table}

\section{Breiman Synthetic Benchmarks: All Delta Methods}
\label{app:breiman_full}

Table~\ref{tab:app_breiman} extends Table~\ref{tab:breiman} 
in the main paper to include all five delta methods, providing
a complete view of the U/B ratio diagnostic.

\begin{table}[h]
	\caption{All delta methods on Breiman synthetic benchmarks
		under probabilistic routing. U/B = boundary-uncertain rate 
		divided by known Bayes error.}
	\label{tab:app_breiman}
	\centering
	\small
	\begin{tabular}{lcccccc}
		\toprule
		& \multicolumn{2}{c}{waveform (BE=14\%)}
		& \multicolumn{2}{c}{twonorm (BE=2.3\%)}
		& \multicolumn{2}{c}{ringnorm (BE=1.7\%)} \\
		\cmidrule(lr){2-3}\cmidrule(lr){4-5}\cmidrule(lr){6-7}
		Method & Dec.Acc & U/B & Dec.Acc & U/B & Dec.Acc & U/B \\
		\midrule
		Baseline & 0.7350 & 0.00 & 0.7759 & 0.00 & 0.7647 & 0.00 \\
		Class-Overlap & 0.8578 & 6.87 & 0.9483 & 39.05 & 0.8431 & 42.71 \\
		Gain-Ratio & 0.8492 & 5.95 & 0.9233 & 37.46 & 0.8261 & 46.22 \\
		Margin & 0.7377 & 0.09 & 0.7774 & 0.42 & 0.7648 & 0.39 \\
		Node-Bootstrap & 0.7845 & 2.57 & 0.8245 & 15.64 & 0.7688 & 11.76 \\
		Quality-Plateau & 0.7711 & 2.54 & 0.8079 & 13.49 & 0.7879 & 10.68 \\
		\bottomrule
	\end{tabular}
\end{table}

\section{Medical and Financial Datasets: All Combinations}
\label{app:medical_full}

Table~\ref{tab:app_medical} extends Table~\ref{tab:medical}
to include all 11 combinations on the four medical and 
financial datasets, supporting the domain analysis in
Section~\ref{sec:medical}.

\begin{table}[h]
	\caption{All combinations on medical and financial datasets
		(5-fold CV means). Dec.Acc = decided accuracy.
		Undec\% = boundary-uncertain rate.}
	\label{tab:app_medical}
	\centering
	\small
	\begin{tabular}{llcccc}
		\toprule
		Dataset & Method & Routing & Dec.Acc & Undec\% & Acc.All \\
		\midrule
		credit-g & Node-Bootstrap & prob. & 0.7677 & 0.710 & 0.7180 \\
		& Margin & h.m. & 0.7631 & 0.271 & 0.7150 \\
		& Gain-Ratio & prob. & 0.7287 & 0.271 & 0.7190 \\
		& Node-Bootstrap & h.m. & 0.7161 & 0.323 & 0.6940 \\
		& Baseline & ref. & 0.7130 & 0.000 & 0.7130 \\
		& Quality-Plateau & prob. & 0.7126 & 0.077 & 0.7130 \\
		& Quality-Plateau & h.m. & 0.7089 & 0.049 & 0.7080 \\
		& Gain-Ratio & h.m. & 0.7035 & 0.162 & 0.7030 \\
		& Class-Overlap & prob. & 0.6993 & 0.901 & 0.7210 \\
		& Class-Overlap & h.m. & 0.6983 & 0.481 & 0.6960 \\
		& Margin & prob. & 0.6808 & 0.481 & 0.7130 \\
		diabetes & Gain-Ratio & prob. & 0.8909 & 0.797 & 0.7473 \\
		& Node-Bootstrap & prob. & 0.8363 & 0.634 & 0.7499 \\
		& Class-Overlap & h.m. & 0.8356 & 0.542 & 0.7239 \\
		& Class-Overlap & prob. & 0.8200 & 0.948 & 0.7500 \\
		& Gain-Ratio & h.m. & 0.8170 & 0.617 & 0.7513 \\
		& Node-Bootstrap & h.m. & 0.8059 & 0.491 & 0.7369 \\
		& Quality-Plateau & prob. & 0.7772 & 0.294 & 0.7408 \\
		& Quality-Plateau & h.m. & 0.7766 & 0.285 & 0.7291 \\
		& Margin & prob. & 0.7418 & 0.095 & 0.7343 \\
		& Margin & h.m. & 0.7410 & 0.083 & 0.7317 \\
		& Baseline & ref. & 0.7382 & 0.000 & 0.7382 \\
		heart-c & Class-Overlap & prob. & 0.7963 & 0.744 & 0.7407 \\
		& Margin & prob. & 0.7744 & 0.589 & 0.7296 \\
		& Gain-Ratio & prob. & 0.7408 & 0.311 & 0.7407 \\
		& Baseline & ref. & 0.7370 & 0.000 & 0.7370 \\
		& Gain-Ratio & h.m. & 0.7341 & 0.211 & 0.7370 \\
		& Quality-Plateau & prob. & 0.7296 & 0.000 & 0.7296 \\
		& Quality-Plateau & h.m. & 0.7222 & 0.000 & 0.7222 \\
		& Node-Bootstrap & prob. & 0.7102 & 0.607 & 0.7407 \\
		& Node-Bootstrap & h.m. & 0.6471 & 0.093 & 0.6556 \\
		& Class-Overlap & h.m. & 0.5886 & 0.104 & 0.6037 \\
		& Margin & h.m. & 0.5869 & 0.004 & 0.5889 \\
		mammography & Gain-Ratio & prob. & 0.9950 & 0.196 & 0.9838 \\
		& Class-Overlap & prob. & 0.9948 & 0.359 & 0.9846 \\
		& Gain-Ratio & h.m. & 0.9944 & 0.226 & 0.9835 \\
		& Node-Bootstrap & prob. & 0.9915 & 0.108 & 0.9848 \\
		& Node-Bootstrap & h.m. & 0.9911 & 0.091 & 0.9847 \\
		& Class-Overlap & h.m. & 0.9881 & 0.033 & 0.9838 \\
		& Quality-Plateau & prob. & 0.9880 & 0.055 & 0.9847 \\
		& Quality-Plateau & h.m. & 0.9879 & 0.025 & 0.9854 \\
		& Margin & h.m. & 0.9867 & 0.005 & 0.9852 \\
		& Margin & prob. & 0.9867 & 0.007 & 0.9845 \\
		& Baseline & ref. & 0.9844 & 0.000 & 0.9844 \\
		\bottomrule
	\end{tabular}
\end{table}

\section{Reproducibility}
\label{app:repro}

All experiments were implemented in Python~3.12 using 
scikit-learn~1.3~\cite{pedregosa2011scikit}, numpy~1.24, 
scipy~1.10, and openml~0.14~\cite{vanschoren2014openml}.
Experiments were run on AMD Ryzen 7 9700X 8-Core Processor, 32GB RAM.
Total benchmark runtime for the CC18 collection was 
approximately CC18: 16933s (~4h 42m) Breiman: 346s (~0h 5m) Medical: 29s.
All random seeds were fixed at 42.
Code is available at \url{https://github.com/smitswil/ternary_tree}.

\section{Ternary Tree Structure Example}
\label{app:tree_structure}

Figure~\ref{fig:tree_structure} shows an annotated ternary decision
tree trained on the breast cancer dataset using the margin $\delta$
method at depth~2 (TrinaryTree, hard-middle routing).
Each split node carries three outgoing branches corresponding to the
three zones of Equation~\ref{eq:zone}: a decisive left branch
($\leq \theta - \delta$), a decisive right branch
($> \theta + \delta$), and a physical middle branch for
boundary-uncertain instances (dashed orange edges).
The $\delta$ value at each node (shown in red) varies with the local
data geometry, demonstrating the per-node adaptivity of the method.
Nodes and leaves reached via the middle branch are shaded orange to
make the boundary-uncertain path traceable from root to leaf.

\begin{figure}[H]
\centering
\includegraphics[width=\textwidth]{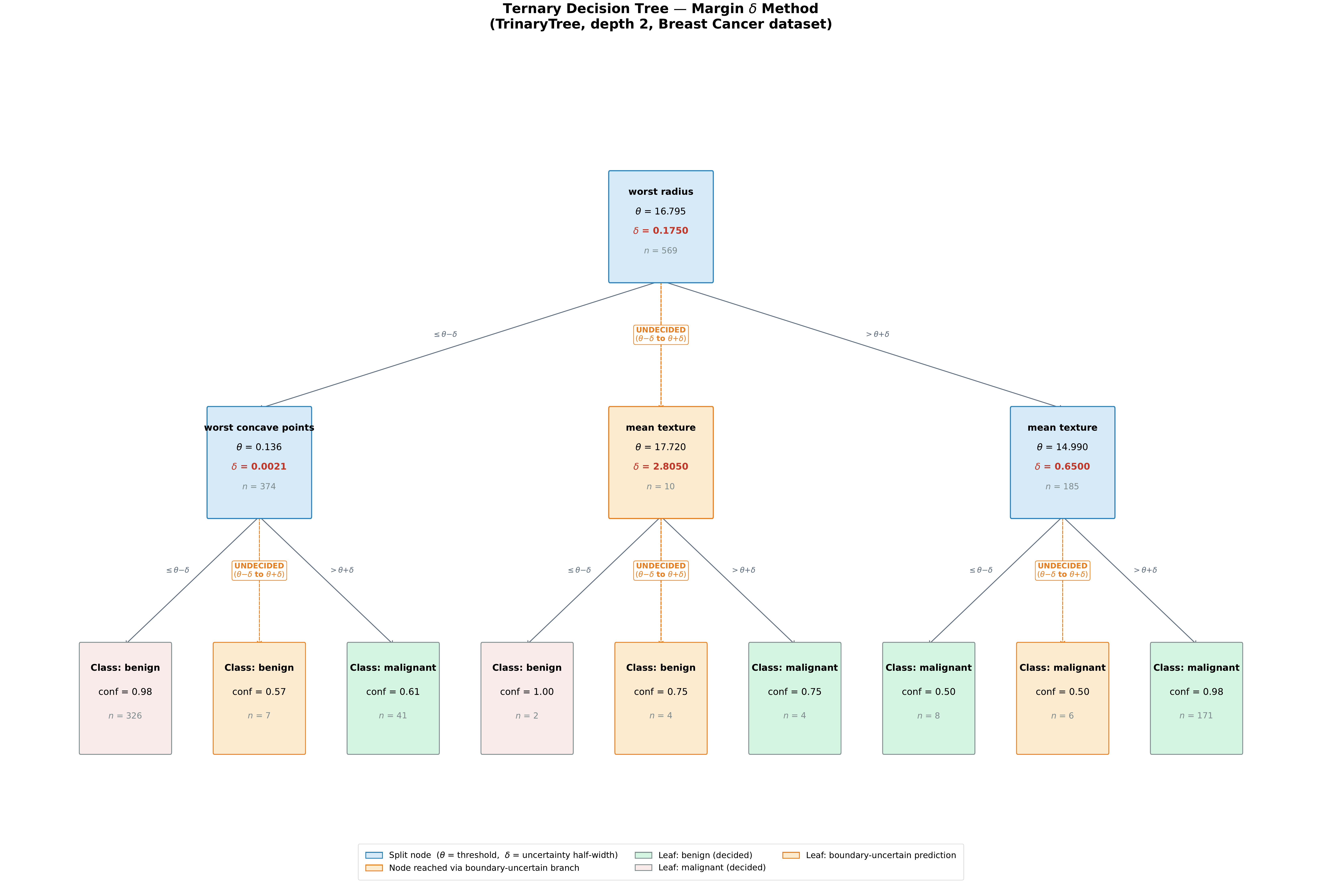}
\caption{Ternary decision tree structure (hard-middle routing,
margin $\delta$ method, depth~2, Breast Cancer dataset).
Each split node routes instances decisively left
($\leq \theta - \delta$), decisively right ($> \theta + \delta$),
or to a dedicated middle subtree for boundary-uncertain instances
(dashed orange edges).
The uncertainty zone half-width $\delta$ (red) is computed locally
at each node from the margin to the nearest cross-class training
example.
In probabilistic routing (Section~\ref{sec:routing}), the same
three-zone evaluation occurs at prediction time without a physical
middle branch, blending child subtree outputs by distance to
$\theta$ instead.}
\label{fig:tree_structure}
\end{figure}

\section{Probabilistic Routing Tree Structure Example}
\label{app:prob_tree}

Figure~\ref{fig:prob_tree} shows a BinaryTernaryTree trained on the same
breast cancer dataset as Figure~\ref{fig:tree_structure}, using the same
margin $\delta$ method and depth, for direct comparison between the two
routing architectures.

In probabilistic routing the tree structure is identical to standard CART:
every split node has exactly two physical children.
The uncertainty zone $[\theta - \delta,\; \theta + \delta]$ is not a
separate branch but a routing rule applied at prediction time.
Instances whose feature value falls within this range receive a prediction
formed by distance-weighted blending of both child subtree outputs:
\begin{equation*}
\hat{p} = w_L \hat{p}_L + w_R \hat{p}_R, \quad
w_L = \frac{\theta + \delta - x_f}{2\delta}, \quad w_R = 1 - w_L.
\end{equation*}
Each node box annotates the blend range $[\theta{-}\delta,\, \theta{+}\delta]$
(grey italic) and the routing logic is summarised in the annotation box.
Comparing this figure to Figure~\ref{fig:tree_structure} (Appendix~\ref{app:tree_structure})
illustrates the architectural distinction: hard-middle routing creates a physical
third subtree trained only on uncertain instances, whereas probabilistic routing
reuses both existing subtrees and blends their outputs at inference time.

\begin{figure}[H]
\centering
\includegraphics[width=\textwidth]{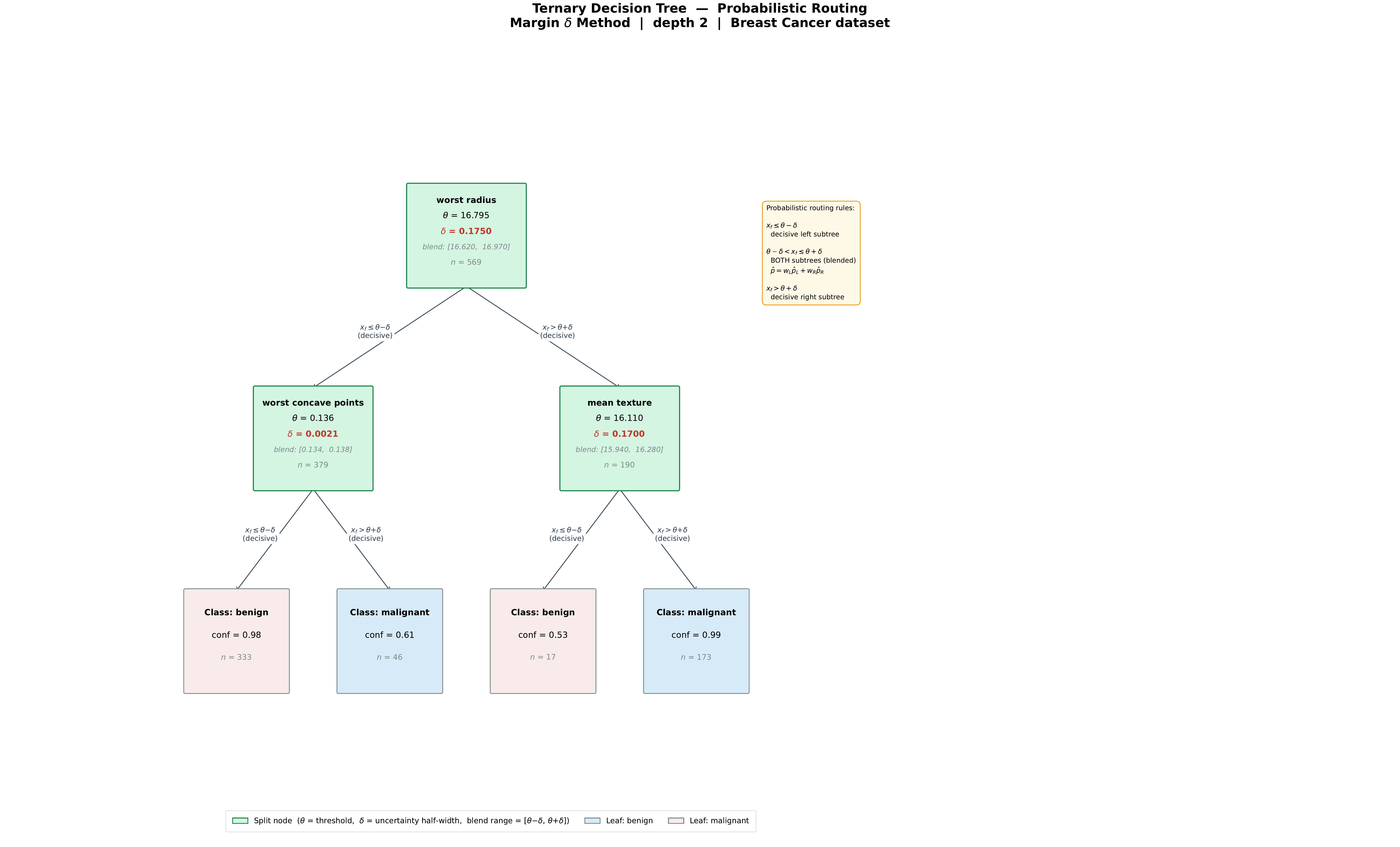}
\caption{BinaryTernaryTree structure (probabilistic routing, margin $\delta$
method, depth~2, Breast Cancer dataset).
Each split node has two physical children, identical in structure to standard CART.
The uncertainty zone $[\theta{-}\delta,\; \theta{+}\delta]$ (blend range, shown in
grey italic) is not a physical branch: instances in this zone receive predictions
formed by distance-weighted blending of both child subtree outputs simultaneously,
with weights proportional to distance from $\theta$.
The routing annotation box (upper right) summarises the three-zone logic.
Compare with Figure~\ref{fig:tree_structure} (hard-middle routing) where the
same zone corresponds to a dedicated physical middle subtree.}
\label{fig:prob_tree}
\end{figure}

\end{document}